\newcommand{\goodarrow}{\textcolor{ForestGreen}{$\downarrow$}} 
\newcommand{\badarrow}{\textcolor{red}{$\uparrow$}}           
\definecolor{AdamWColor}{HTML}{C41E3A}
\definecolor{MuonColor}{HTML}{1E3AC4}
\definecolor{SpectralSphereColor}{HTML}{2E9D18}
\definecolor{MuonSphereColor}{HTML}{1EC48A}
\definecolor{lightpurple}{rgb}{0.94,0.90,1.0}  
\definecolor{codeblue}{rgb}{0.25,0.5,0.8}      
\definecolor{codegray}{rgb}{0.5,0.5,0.5}       
\definecolor{lineBlue}{HTML}{708090}
\definecolor{backBlue}{HTML}{F8F9FA}
\definecolor{titleBlue}{HTML}{C6CCD3}
\crefname{condition}{Condition}{Conditions}
\Crefname{condition}{Condition}{Conditions}
\crefname{lstlisting}{Listing}{Listings}
\Crefname{lstlisting}{Listing}{Listings}
\lstdefinestyle{mystyle}{
    backgroundcolor=\color{lightpurple},
    basicstyle=\ttfamily\small,
    keywordstyle=\color{codeblue}\bfseries,
    commentstyle=\color{codegray}\itshape,
    stringstyle=\color{red},
    numbers=left,
    numberstyle=\tiny\color{gray},
    frame=single,
    breaklines=true,
    tabsize=4
}
\lstdefinestyle{Code}{
  language=Python,                       
  backgroundcolor=\color{blue!5},        
  basicstyle=\footnotesize\ttfamily\color{black}, 
  keywordstyle=\color{blue!70}\bfseries,  
  commentstyle=\color{gray!70},          
  stringstyle=\color{red!80},             
  numbers=left,                           
  numberstyle=\tiny\color{blue!30},       
  stepnumber=1,                           
  numbersep=10pt,                         
  breaklines=true,                        
  showstringspaces=false,                 
  escapeinside={\%*}{*)},                 
  morekeywords={*,...},                    
  frame=shadowbox,
  tabsize=4,
  rulesepcolor=\color{blue!3}, 
  xleftmargin=10pt, 
  xrightmargin=10pt 
}
\definecolor{natureBlue}{RGB}{0, 66, 114}       
\definecolor{natureBack}{RGB}{247, 249, 255}   
\sffamily\color{white},
\definecolor{deepTeal}{RGB}{0, 85, 95}         
\definecolor{softTealBack}{RGB}{247, 250, 249} 
\sffamily\color{white},
\DeclareMathOperator*{\argmax}{argmax}
\DeclareMathOperator*{\argmin}{argmin}
\theoremstyle{plain}
\newtheorem{theorem}{Theorem}[section]
\theoremstyle{definition}
\def\eqref#1{equation~(\ref{#1})}
\def\1{\bm{1}}
\def\ve{{\bm{e}}}
\def\vu{{\bm{u}}}
\def\vv{{\bm{v}}}
\def\vx{{\bm{x}}}
\def\vy{{\bm{y}}}
\def\evsigma{{\sigma}}
\def\mG{{\bm{G}}}
\def\mM{{\bm{M}}}
\def\mT{{\bm{T}}}
\def\mU{{\bm{U}}}
\def\mV{{\bm{V}}}
\def\mW{{\bm{W}}}
\def\mPhi{{\bm{\Phi}}}
\def\mSigma{{\bm{\Sigma}}}
\def\mTheta{{\bm{\Theta}}}
\DeclareMathAlphabet{\mathsfit}{\encodingdefault}{\sfdefault}{m}{sl}
\SetMathAlphabet{\mathsfit}{bold}{\encodingdefault}{\sfdefault}{bx}{n}
\def\sR{{\mathbb{R}}}
\DeclareMathOperator{\sign}{sign}
\title{Controlled LLM Training on Spectral Sphere}
\author{
    Tian Xie\textsuperscript{\rm 1 \footnotemark[1] }\quad
    Haoming Luo\textsuperscript{\rm 2 }\quad
    Haoyu Tang\textsuperscript{\rm 2}\quad
    Yiwen Hu\textsuperscript{\rm 2}\quad    
    Jason Klein Liu\textsuperscript{\rm 4} \quad
    Qingnan Ren\textsuperscript{\rm 1} \quad \\ \\
    \textbf{Yang Wang}\textsuperscript{\rm 1}\quad 
    \textbf{Wayne Xin Zhao\textsuperscript{\rm 2,4}} \quad
    \textbf{Rui Yan\textsuperscript{\rm 3}} \quad
     \textbf{Bing Su\textsuperscript{\rm 2}} \quad
    \textbf{Chong Luo\textsuperscript{\rm 1}} \quad
    \textbf{Baining Guo\textsuperscript{\rm 1}} \\ \\
    \textsuperscript{\rm 1}Microsoft Research Asia \quad
    \textsuperscript{\rm 2}Renmin University \quad
    \textsuperscript{\rm 3}Wuhan University  \quad
    \textsuperscript{\rm 4}IQuest Research
}
\newcommand{\insertAuthorFootnotes}{%
  \renewcommand{\thefootnote}{\fnsymbol{footnote}}%
  \footnotetext[1]{Corresponding to unakar666@gmail.com}%
  \renewcommand{\thefootnote}{\arabic{footnote}}%
}
\begin{document}
\maketitle
\insertAuthorFootnotes
\begin{center}
\vspace{-2.5em}
\raisebox{-0.6ex}{\includegraphics[height=1.2em]{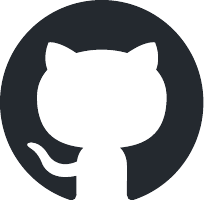}}
    \hspace{0.1em}
    \textbf{Project Page:} \href{https://github.com/Unakar/Spectral-Sphere-Optimizer}{%
     \texttt{Spectral-Sphere-Optimizer}%
}
\vspace{0.5em}
\end{center}

\begin{abstract}
Scaling large models requires optimization strategies that ensure rapid convergence grounded in stability. Maximal Update Parametrization ($\boldsymbol{\mu}$P) provides a theoretical safeguard for width-invariant $\Theta(1)$ activation control, whereas emerging optimizers like Muon are only ``half-aligned'' with these constraints: they control updates but allow weights to drift. To address this limitation, we introduce the \textbf{Spectral Sphere Optimizer (SSO)}, which enforces strict module-wise spectral constraints on both weights and their updates. By deriving the steepest descent direction on the spectral sphere, SSO realizes a fully $\boldsymbol{\mu}$P-aligned optimization process. To enable large‑scale training, we implement SSO as an efficient parallel algorithm within Megatron. Through extensive pretraining on diverse architectures, including Dense 1.7B, MoE 8B-A1B, and 200-layer DeepNet models, SSO consistently outperforms AdamW and Muon. Furthermore, we observe significant practical stability benefits, including improved MoE router load balancing, suppressed outliers, and strictly bounded activations.

\end{abstract}
\begin{figure}[ht]
    \centering
    \begin{subfigure}{0.48\linewidth}
        \centering
        \includegraphics[width=\linewidth]{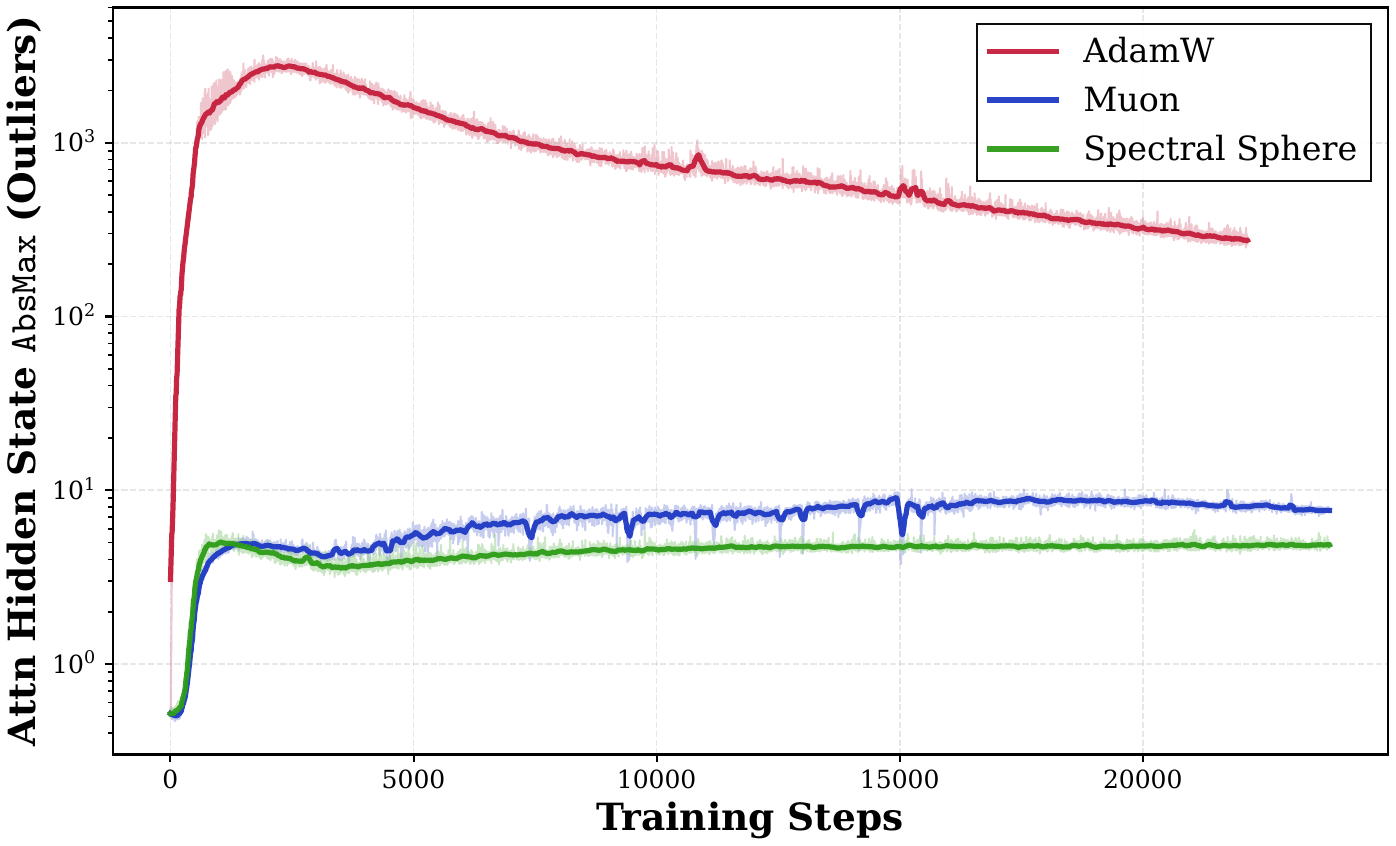}
        \caption{\texttt{AbsMax} (outliers) of Attention Activations}
    \end{subfigure}
    \hfill
    \begin{subfigure}{0.48\linewidth}
        \centering
        \includegraphics[width=\linewidth]{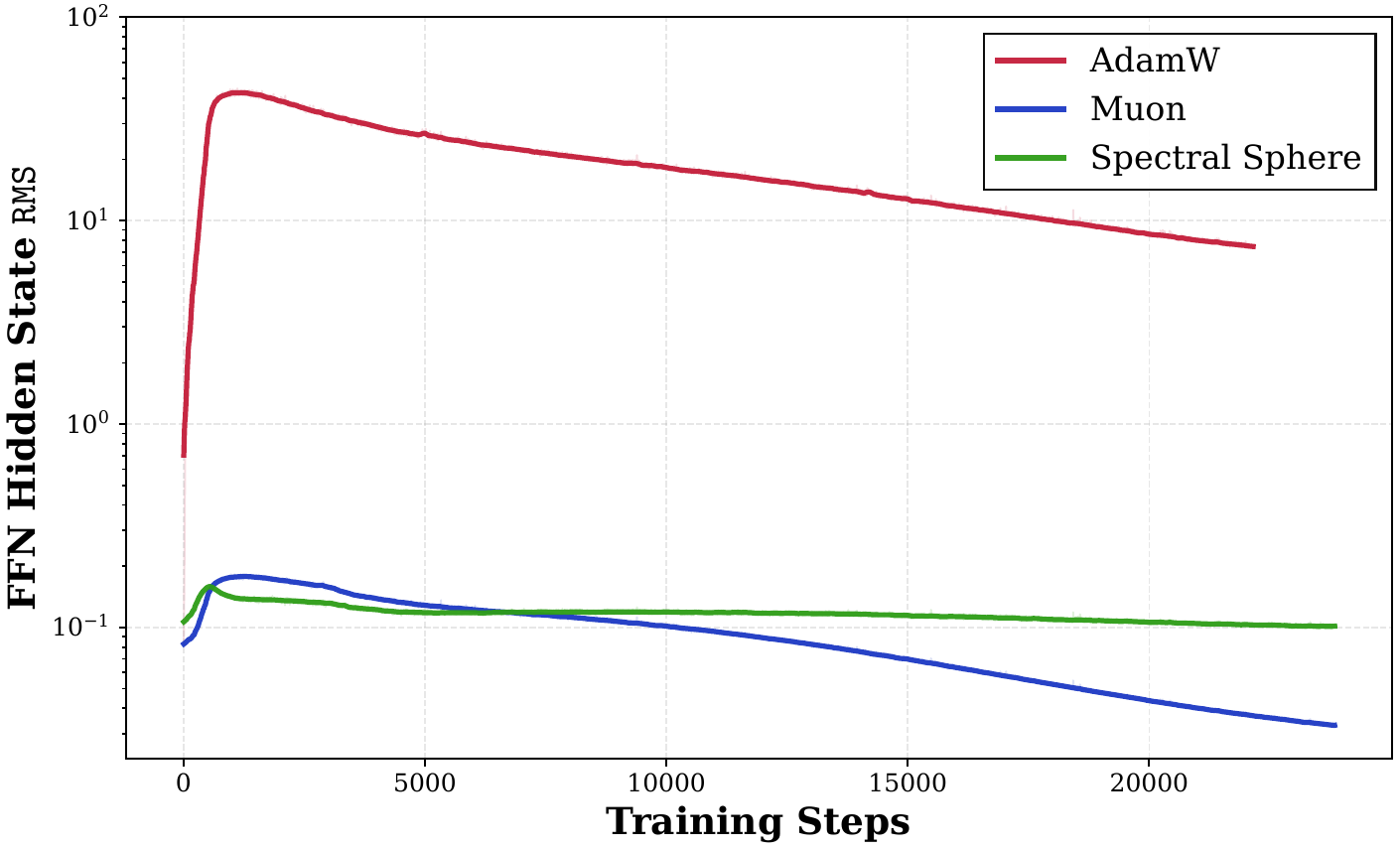}
        \caption{\texttt{RMS} of FFN Activations}
    \end{subfigure}
    \caption{\textbf{Training dynamics of Dense-1.7B activations (log-scaled cross-layer averages).} 
    Our \textcolor{SpectralSphereColor}{Spectral Sphere} maintains constant activation magnitudes throughout training because its $\boldsymbol{\mu}$P-metrized constraints on the spectral manifold ensure that the activation \texttt{RMS} remains at $\Theta(1)$ scale. \textcolor{MuonColor}{Muon} activations show a mild drift due to learning rate decay and weight decay. By contrast, \textcolor{AdamWColor}{AdamW} proves the most unstable, generating significantly larger activations, with attention \texttt{AbsMax} and FFN \texttt{RMS} reaching $\sim\!100\times$ magnitude compared to those spectral optimizers.}
    \label{fig:hidden_state_tracker}
\end{figure}

\newpage
\tableofcontents
\newpage
\section{Introduction}

LLM training is, at its core, a pursuit of \textbf{convergence speed} grounded in the necessity of \textbf{stability}. While the community has explored various optimization strategies, we argue that the essential principle governing training stability is the \emph{Maximal Update Parametrization} ($\boldsymbol{\mu}$P)~\citep{yang2023spectral}. By mandating that the spectral norms of weights and updates scale as $\Theta(\sqrt{d_{\text{out}}/d_{\text{in}}})$ to ensure width-invariant activations remains $\Theta(1)$ scale, $\boldsymbol{\mu}$P serves as the mathematical safeguard against activation explosions~\citep{takase2025spikemorestabilizingpretraining}.  However, the current landscape is saturated with methods that fail to satisfy these fundamental conditions. Conventional soft regularization methods, such as decoupled weight decay or initialization strategies, prove insufficient over long training horizons~\citep{kosson2025weightdecaymattermup}. This unconstrained weight drift destabilizes the \emph{effective step size} (update-to-weight ratio) and degrades feature learning.

On the other side of the spectrum lies the pursuit of optimal convergence. The recent Muon optimizer~\citep{jordan2024muon} has demonstrated remarkable efficiency, often interpreted as steepest descent under the spectral norm. In analyzing Muon, we uncover a surprising insight: it acts as a \textbf{``half-aligned''}  solution to the $\boldsymbol{\mu}$P constraints. However, maintaining stable features requires constraining not only the updates but also the weights themselves. Unstable activations like attention logits explosion were still observed in Muon training~\citep{kimiteam2025kimik2openagentic}. Consequently, practitioners are forced to rely on ``non-essential'' architectural patches to artificially force stability~---~ranging from aggressive normalization schemes like SandwichNorm~\citep{ding2021cogviewmasteringtexttoimagegeneration} and QK-Norm~\citep{henry2020querykeynormalizationtransformers}, to ad-hoc fixes like logit softcapping~\citep{kimiteam2025kimik2openagentic}~---~often at the cost of model expressivity and requiring extensive hyperparameter tuning. This observation motivates a fundamental question:
\begin{quote}
\emph{What if an optimizer could simultaneously satisfy the steepest descent property for \textbf{convergence speed} and the strict $\boldsymbol{\mu}$P constraints for \textbf{fundamental stability}?}
\end{quote}
To answer this, we propose the mathematically unique solution that unifies these two objectives. By identifying the spectral sphere as the natural manifold for stable feature learning, SSO derives the steepest descent direction constrained within this geometry. Unlike heuristic manifold projection methods~\citep{xie2025mhcmanifoldconstrainedhyperconnections,pethick2025trainingdeeplearningmodels}, SSO solves a constrained optimization problem in the tangent space via a Lagrange multiplier search, followed by a retraction step to map the trajectory back onto the spectral sphere.

To enable large-scale training, we offer a systematic implementation in Megatron. We provide principled guidelines for spectral preconditioned optimization, specifically deriving the optimal \emph{learning rate scaler}, determining the critical \emph{atomic module granularity}, and identifying the optimal \emph{spectral radius} to control activation at optimal scales precisely. These offer a robust recipe for large-scale training. Specifically to mitigate the overhead of the iterative root solver, we utilize a novel distributed strategy centered on atomic module sharding\citep{emergeing_optimizer}. This technique partitions fused params into independent spectral units, enabling communication-free local updates. We further address solver-induced workload imbalance through a size-aware ping-pong placement strategy, and accelerate matrix operations using adaptive kernel dispatcher, alongside multi-stream execution and singular vector caching.

Empirically, we validate SSO through extensive pretraining experiments across various scales, including Dense 1.7B, MoE 8B-A1B, and 200-layer DeepNet models. SSO consistently outperforms AdamW and Muon while uniquely preserving stable $\boldsymbol{\mu}$P learning rate transfer. Notably, it yields superior training dynamics: it significantly improves MoE router load balancing, suppresses outliers in deep networks, and strictly bounds activations within a tunable scale.

\section{Preliminary}

\subsection{Maximal Update Parametrization (\texorpdfstring{$\boldsymbol{\mu}$}{mu}P) }
\label{sec:bg_mup}

$\boldsymbol{\mu}$P prescribes how
activations and weight updates should scale with width to preserve feature learning~\citep{yang2023spectral}.
Ideal feature learning requires the scale of activations to remain as invariant as possible.
We use \textbf{operator norm} to characterize how the norm of activations changes through a linear layer.

Considering a linear layer $\vy=\mW\vx$ with
$\mW\in\mathbb{R}^{d_{\mathrm{out}}\times d_{\mathrm{in}}},\vx\in\mathbb{R}^{d_{\mathrm{in}}}$, the RMS norm is defined as
\begin{equation}
\|\vx\|_{\mathrm{rms}} := \frac{\|\vx\|_2}{\sqrt{d_{\mathrm{in}}}},
\end{equation}
while the RMS-to-RMS operator norm is defined as 
\begin{equation}
\|\mW\|_{\mathrm{rms}\to\mathrm{rms}}
:= \sup_{\vx\neq\bm{0}}
\frac{\|\mW\vx\|_{\mathrm{rms}}}{\|\vx\|_{\mathrm{rms}}}.
\label{eq:rms_to_rms}
\end{equation}

$\boldsymbol{\mu}$P scale invariance requires maintaining
$\|\vy\|_{\mathrm{rms}}=\|\vx\|_{\mathrm{rms}}=\Theta(1),$
which is equivalent to enforcing the RMS-to-RMS condition
$\|\mW\|_{\mathrm{rms}\to\mathrm{rms}}= \|\mW\|_2\sqrt{{d_{\mathrm{in}}}/{d_{\mathrm{out}}}}=\Theta(1).$
This induces the following spectral norm constraint on the weight matrix
$
\|\mW\|_2 = \Theta (\sqrt{{d_{\mathrm{out}}}/{d_{\mathrm{in}}}}).
$
A similar requirement applies to parameter updates; together, we refer to these as the spectral $\boldsymbol{\mu}$P condition below.

\begin{mdframed}[style=MyBoxStyle1, frametitle={Spectral $\boldsymbol{\mu}$P Condition}, nobreak=true]
\citet{yang2023spectral} shows that preserving \emph{scale-invariant activations} for feature learning requires the same law to hold for both weights and their updates:
\[
\|\mW\|_2 = \Theta\left(\sqrt{\frac{d_{\mathrm{out}}}{d_{\mathrm{in}}}}\right),
\qquad
\|\mPhi\|_2 = \Theta\left(\sqrt{\frac{d_{\mathrm{out}}}{d_{\mathrm{in}}}}\right).
\]
\end{mdframed}

\subsection{Steepest Descent under Different Norms}
\label{sec:bg_mdl}

Following~\citet{bernstein2024oldoptimizernewnorm, cesista2025sdcbs}, many successful optimizers can be interpreted as \textbf{first-order} methods without convexity assumptions. Specifically, after switching off exponential moving averages, these algorithms reduce to instances of steepest descent governed by distinct norms:  SGD corresponds to the Frobenius norm $\|\!\cdot\!\|_F$, AdamW~\citep{loshchilov2019decoupled} to the $\ell_{\infty}$ norm, and Shampoo~\citep{gupta2018shampoo} to the spectral norm $\|\!\cdot\!\|_2$.

In this view, we determine the update $\mPhi$ by iteratively minimizing a first-order Taylor approximation of the loss around the current weights $\mW$, subject to a hard constraint on the step size:
\begin{equation}
    \mPhi := \underset{\|\mPhi\| \le \eta}{\argmin} \left\{ \mathcal{L}(\mW) + \left\langle \nabla_\mW\mathcal{L}(\mW), \mPhi \right\rangle \right\} = \underset{\|\mPhi\| \le \eta}{\argmin} \langle \mG, \mPhi \rangle,
    \label{eq:constrained_opt}
\end{equation}
where $\mG:=\nabla_\mW\mathcal{L}(\mW)$ is the gradient of the loss function at $\mW$. The update is thus determined by two priors: a \textbf{norm} $\|\!\cdot\!\|$ assigned according to the specific \emph{functional role} of the module, and a \textbf{step size} $\eta$ that governs the update scale. The solution is as follows:

\begin{mdframed}[style=MyBoxStyle1, frametitle={Definition 1: Steepest Descent Update}, nobreak=true]

Given a norm $\|\!\cdot\!\|$ defining the geometry and a step size $\eta$ governing the scale, the \textbf{steepest descent update} solution to \cref{eq:constrained_opt} is given by
\begin{equation}
    \mPhi = -\eta \cdot \mPhi_{\text{unit}},
\quad
\text{where}
\quad
\mPhi_{\text{unit}} := \underset{\|\mT\| \leq 1}{\arg\max} \langle \mG, \mT \rangle,
\end{equation}
\end{mdframed}

\subsection{Muon Optimizer}
\label{sec:bg_spectral_sd}

Following the framework of metrized deep learning (\cref{sec:bg_mdl}), Muon~\citep{jordan2024muon} can be interpreted as steepest descent under the \textbf{spectral norm}. For a 2D weight matrix $\mW$, the spectral norm $\|\!\cdot\!\|_2$ \footnote{For vectors, $\| \vv\|_2$ denotes the $\ell_2$ norm; for matrices, $\|\mW\|_2 $ denotes the (induced $\ell_2\!\to\!\ell_2$) spectral norm.}
 gives the tightest bound on the matrix's input-output gain:
\begin{equation}
\|\mW\|_2 := \sup_{\vx\ne\bm{0}} \frac{\|\mW \vx\|_2}{\|\vx\|_2},
\end{equation}
By choosing the spectral norm to constrain the update direction $\mPhi$, the steepest descent direction is uniquely given by the \textbf{matrix sign function (msign)} (\cref{app:dual}):
\begin{equation}
\operatorname{msign}(\mG) = \mU \operatorname{sign}(\boldsymbol{\Sigma}) \mV^\top 
= \mU_{[:, :r]} \mV_{[:, :r]}^\top,
\end{equation}
where $\mG = \mU \boldsymbol{\Sigma} \mV^\top$ is the singular value decomposition (SVD). The msign operation orthogonalizes the gradient, equalizing all active singular directions and yielding a spectrum isotropic update. A key contribution of Muon is the efficient approximation of $\operatorname{msign}(\mG)$ via Newton–Schulz iterations which can be implemented on GPUs. 

However, Muon constrains only the backward update $\mPhi$, leaving the forward weights $\mW$ unconstrained. This often leads to unstable $\boldsymbol{\mu}$P feature learning in hidden states rms, motivating our development of \textbf{a fully aligned approach that constrains both $\mW$ and $\mPhi$}.

\begin{figure}[H]
    \centering
    \includegraphics[width=0.85\linewidth]{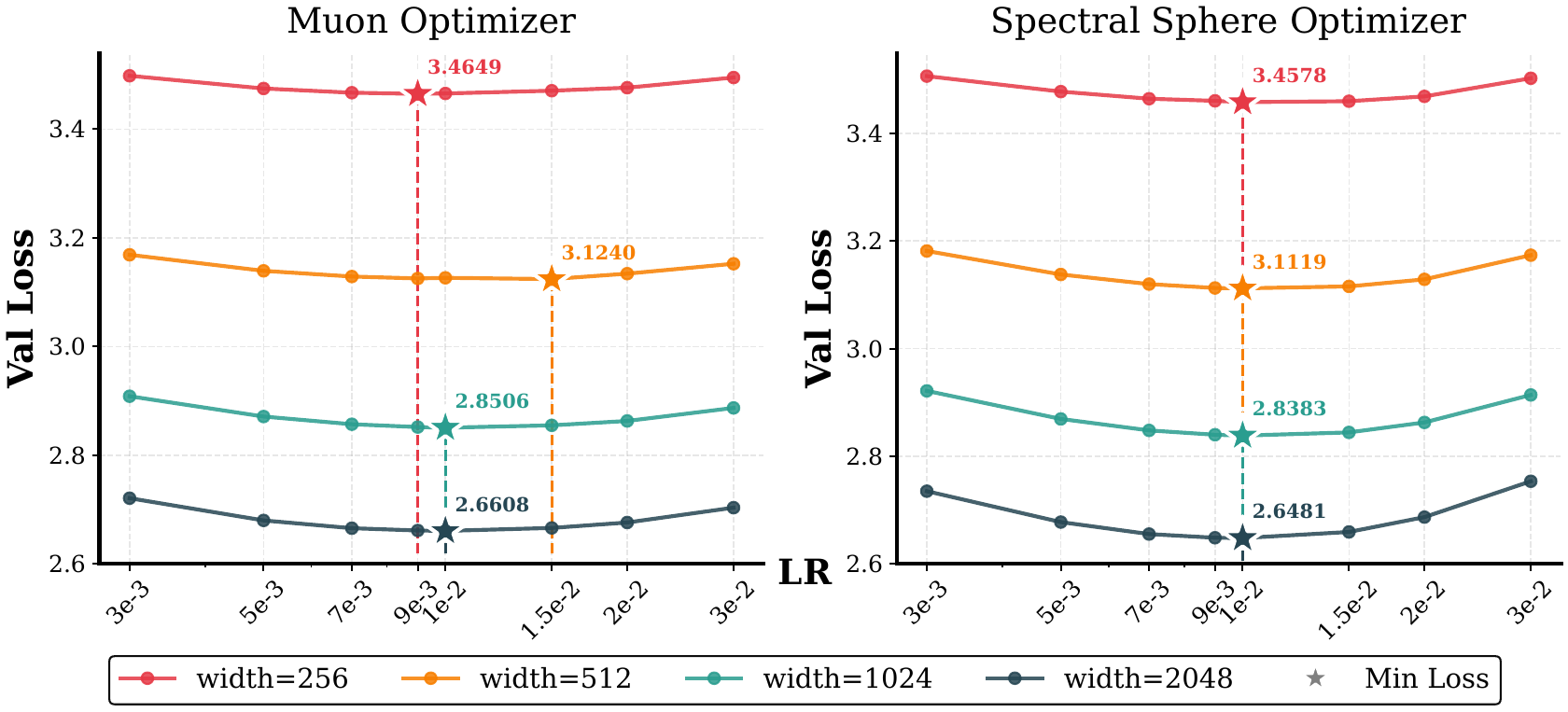} 
    \caption{
    \textbf{$\boldsymbol{\mu}$P width scaling across 25$\times$ model sizes (70M to 1.8B).}
    Although $\boldsymbol{\mu}$P aims for width-invariant scaling, \textcolor{MuonColor}{Muon} still exhibits notable optimal learning rate drift. In contrast, our \textcolor{SpectralSphereColor}{Spectral Sphere} achieves stable LR transfer, while also obtaining \textbf{lower optimal loss than} \textcolor{MuonColor}{Muon}. More details and related experiments are provided in~\cref{app:mup_width}.
    }
    \label{fig:mup_width}
\end{figure}

\section{Method}
\label{sec:methods}

\subsection{Optimization Target Formulation}
\label{sec:problem_formulation}

We start by focusing on a hidden layer matrix $\mW\in\mathbb{R}^{d_{\text{out}}\times d_{\text{in}}}$. To satisfy the spectral $\boldsymbol{\mu}$P scaling condition in~\cref{sec:bg_mup}, we set the spectral scale to target radius $R$:
\begin{equation}
\label{eq:radius_def}
R = \Theta\left(\sqrt{d_{\text{out}}/d_{\text{in}}}\right).
\end{equation}
Following metrized deep learning (\cref{sec:bg_mdl}), we define the unit update $\mPhi$ as the solution to a constrained optimization problem:

\begin{mdframed}[style=MyBoxStyle2, frametitle={Formulation: Steepest Descent on the Spectral Sphere}, nobreak=true]

We perform \textbf{steepest descent} under the \textbf{spectral norm}, constraining both the \textbf{weights} and the \textbf{updates} to a spectral sphere of radius $R$. Specifically, we parameterize the update step as $\Delta \mW = \eta R \mPhi$, where $\eta$ is the base learning rate. The update direction $\mPhi$ is the solution to:
\begin{equation}
\label{eq:opt_problem}
\begin{aligned}
\max_{\mPhi}\quad & \langle \mG, \mPhi \rangle \\
\text{s.t.}\quad & \|\mPhi\|_2 = 1, \\
                 & \|\mW - \eta R \mPhi\|_2 = \|\mW\|_2 = R.
\end{aligned}
\end{equation}
\end{mdframed}

\subsection{First-Order Tangent Space Constraint}
\label{sec:tangent_linearization}

Assisted by the uniqueness of the top singular value\footnote{Numerical coincidence of singular values is a measure-zero event for random matrices~\citep{tao2014randommatricessimplespectrum}. Quantitatively, it occurs with probability at most $\exp(-c\max(d_{\mathrm{in}},d_{\mathrm{out}}))$~\citep{han2025repeatedsingularvaluesrandom}.}, the spectral norm $\|\mW\|_2$ is differentiable with gradient $\mTheta := \nabla_{\mW}\|\mW\|_2 = \vu_1 \vv_1^\top$, where $(\vu_1, \vv_1)$ are the principal left and right singular vectors~\citep{watson1992characterization}.
We consider a first-order Taylor Expansion of the spectral norm around $\mW$:
\begin{equation}
\label{eq:taylor_spectral}
\|\mW - \eta R \mPhi\|_2 = \|\mW\|_2 - \eta R \langle \mTheta, \mPhi \rangle + \mathcal{O}(\eta^2 R^2 \|\mPhi\|_2^2).
\end{equation}
To enforce the invariance condition $\|\mW - \eta R \mPhi\|_2 = \|\mW\|_2$, the first-order term must vanish, which implies the tangent constraint:$\langle \mTheta, \mPhi \rangle = 0$. 
\cref{eq:opt_problem} thus reduces to
\begin{equation}
\label{eq:linearized_opt}
\max_{\mPhi}\ \langle \mG,\mPhi\rangle
\quad \text{s.t.}\quad
\|\mPhi\|_2\ = 1,\ \ \langle \mTheta,\mPhi\rangle = 0.
eq\end{equation}
We then introduce a \emph{Lagrange multiplier} \(\lambda\) and maximize \emph{Lagrangian}  \( \mathcal{L}(\mPhi; \lambda) =  \langle \mG+\lambda\mTheta,\mPhi\rangle\) under constraint \(\|\mPhi\|_2=1\). The analytical solution and numerical method are summarized below.

\begin{mdframed}[style=MyBoxStyle2, frametitle={Solution: Tangent Space Update via $\lambda$-Search}]
    For a fixed $\lambda$, the steepest descent direction is (proof in~\cref{app:dual})
    \begin{equation}
        \label{eq:optimal_phi}
        \mPhi^\star(\lambda) = \operatorname{msign}(\mG + \lambda \mTheta),
    \end{equation}
    where $\lambda^\star$ is the unique root of the constraint function
    \begin{equation}
        h(\lambda) := \langle \mTheta, \operatorname{msign}(\mG + \lambda \mTheta) \rangle, \quad h(\lambda^\star) = 0.
    \end{equation}
    
    \textbf{Theoretical Properties} (proof in~\cref{app:localization}, visualized in~\cref{fig:f_lambda_numeric}):
    \begin{enumerate}[label=\roman*), leftmargin=1.5em, itemsep=2pt, topsep=2pt]
        \item \textbf{Monotonicity:} The function $h(\lambda)$ is monotonically non-decreasing and transitions from $-1$ to $+1$ as $\lambda$ varies from $-\infty$ to $+\infty$.
        \item \textbf{Root Localization:} The solution $\lambda^\star$ is strictly bounded within the interval $[-2\|\mG\|_*,\, 2\|\mG\|_*]$, providing a finite search space~\footnote{Nuclear norm $\|\!\cdot\!\|_*$ is the sum of singular values.}.
    \end{enumerate}

    \vspace{4pt}
    \textbf{Numerical Algorithm} (overhead analysis in~\cref{sec:system}): 
    
    Given the monotonic nature of $h(\lambda)$, we locate $\lambda^\star$ efficiently:
    \begin{itemize}[leftmargin=1.5em, label=$\triangleright$, itemsep=2pt, topsep=2pt]
        \item \textbf{Bracketing:} Leveraging monotonicity, we start from $\lambda = 0$ and exponentially expand the search bracket in the opposite direction of the sign of $h(0)$ until the root is enclosed.
        \item \textbf{Bisection:} We isolate $\lambda^\star$ via standard bisection within the bracketed interval.
    \end{itemize}
\end{mdframed}
\begin{figure}[ht]
    \centering
    \begin{subfigure}
        {0.45\linewidth}
        \centering
        \includegraphics[width=\linewidth]{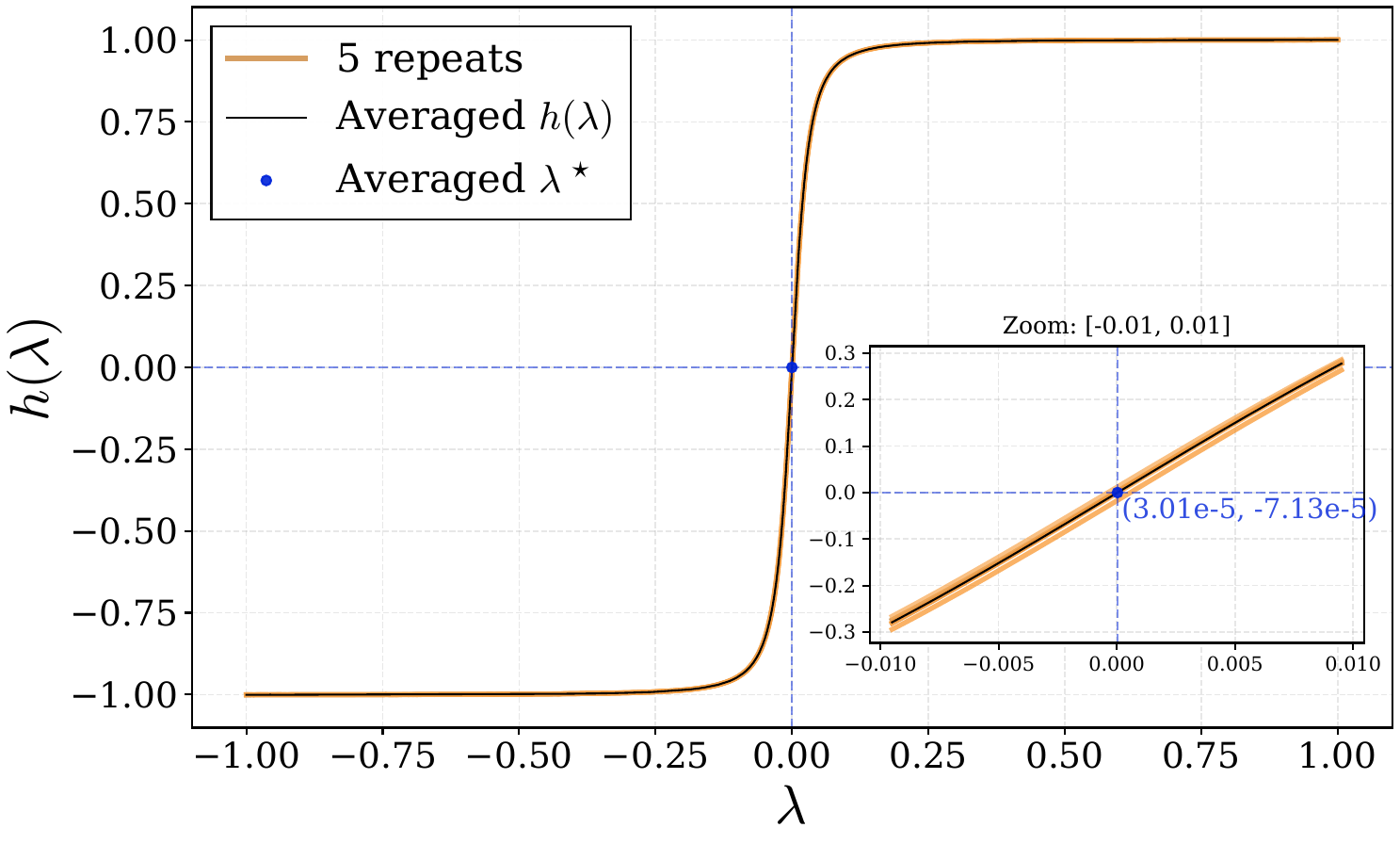}
        \caption{$1024 \times 3072$ matrix}
    \end{subfigure}
    \hfill
    \begin{subfigure}
        {0.45\linewidth}
        \centering
        \includegraphics[width=\linewidth]{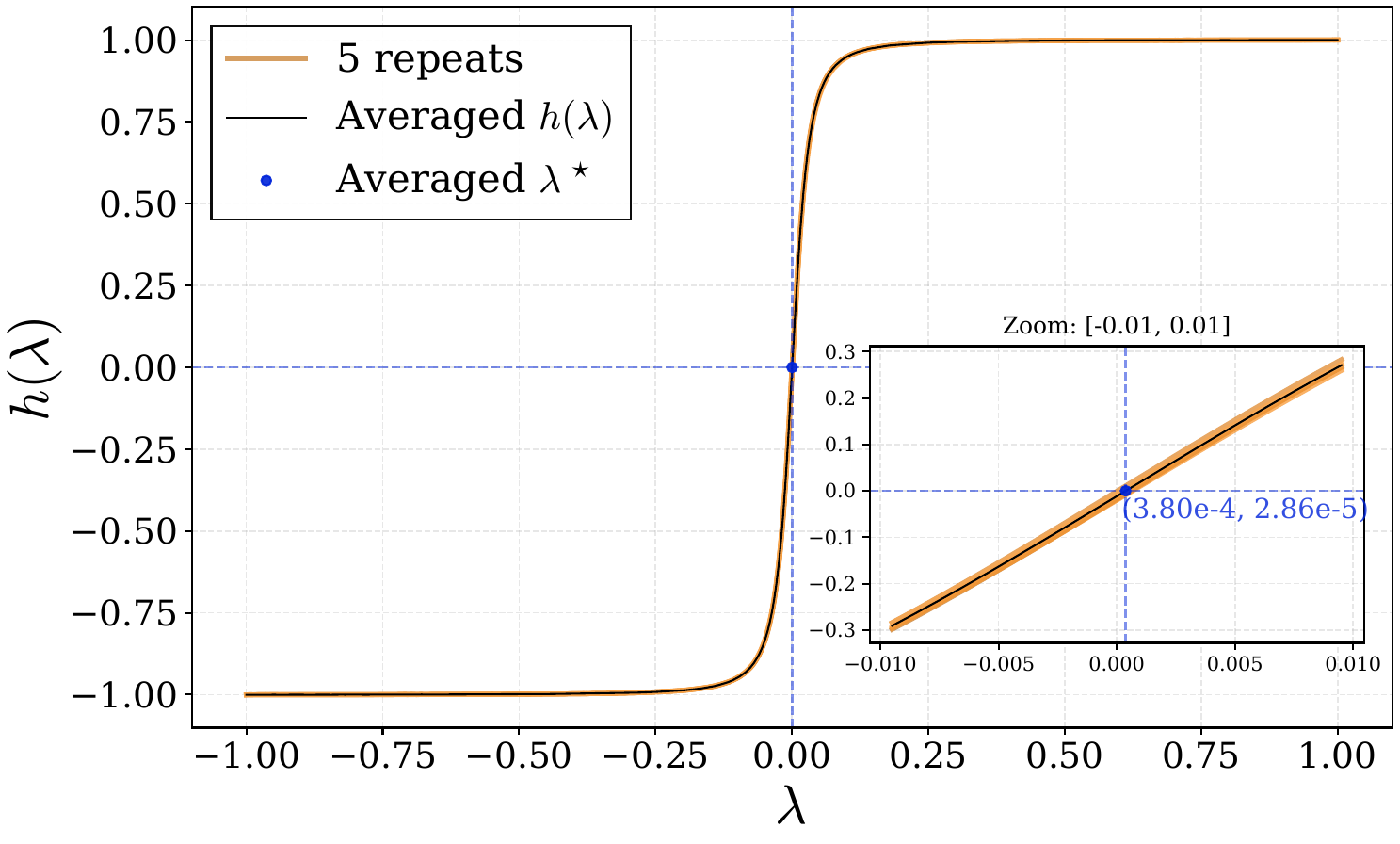}
        \caption{$4096 \times 1024$ matrix}
    \end{subfigure}
    \caption{\textbf{Empirical curves of} $h(\lambda) = \langle \mTheta, \operatorname{msign}(\mG + \lambda \mTheta) \rangle$ \textbf{for random matrices}. $h(\lambda)$ is monotonic non-decreasing in $\lambda$, and its root $\lambda^\star$ lies close to zero (proof in~\cref{app:localization}). Here, each curve is obtained by averaging over 5 repeats, and each matrix is initialized from $\mathcal{N}(0,0.02^2)$.}
    \label{fig:f_lambda_numeric}
\end{figure}

\subsection{Second-Order Manifold Constraint}
\label{sec:retraction}
Note that the remainder $\mathcal{O}(\eta^2 R^2 \|\mPhi\|_2^2)$  in~\cref{eq:taylor_spectral} may accumulate over iterations, causing gradual drift off the spectral sphere. To enforce the exact constraint $\|\mW\|_2 = R$ throughout training, we apply a retraction step that projects the weights back onto the manifold:
\begin{equation}
\label{eq:retraction_map}
\mW
\gets
\mW\cdot \frac{R}{\|\mW\|_2}.
\end{equation}
While the retraction is conceptually a post-update projection, we implement it as a pre-update correction for efficiency, which is operationally equivalent. This reordering allows us to invoke the computationally expensive Power Iteration only once per step, reusing the resulting singular triplet for both the manifold retraction and the tangent projector $\mTheta$ (Lines 6--9 in~\cref{alg:sso}).

The retraction strictly constrains $\|\mW\|_2=R$, which via~\cref{eq:weight_bound}, automatically bounds the weight magnitudes. As a result, weight decay, which is primarily introduced to limit the weight scale, becomes redundant. We therefore \textbf{eliminate weight decay} 
in hidden 2D weights\footnote{We still apply weight decay to params like \texttt{Embedding} for possible scaling stability, although our ablations on 1.7B model suggest that disabling weight decay may actually be slightly better.}, removing a sensitive hyperparameter from training. More details can be found in~\cref{app:wd_retract}.
\begin{equation}
    \label{eq:weight_bound}
    \|\mW\|_F \le \sqrt{rank(\mW)}\,\|\mW\|_2
    \le \sqrt{\min(d_{\text{out}},d_{\text{in}})}\,R .
\end{equation}

\subsection{Overall Algorithm \& Interpretation} 
\begin{algorithm}[ht]
    \caption{Spectral Sphere Optimizer (SSO)}
    \label{alg:sso}
    \begin{algorithmic}[1]
        \Require 
            Initial 2D weights $\mW_0 \in \mathbb{R}^{d_{\text{out}} \times d_{\text{in}}}$,
            spectral $\boldsymbol{\mu}$P scaler $R = \sqrt{d_{\text{out}}/d_{\text{in}}}$, 
            learning rate $\eta$, 
            momentum coefficient $\beta$, 
            precision tolerance $\epsilon$
        \State \textbf{Initialize:} $\mW_0 \gets R \cdot \mW_0 / \|\mW_0\|_2$, $\mM_0 \gets 0$

        \For{$t = 0, 1, \dots$}
            \State $\mG_t \gets \nabla_\mW \mathcal{L}(\mW_t)$
            \State $\mM_t \gets \beta \mM_t + (1-\beta) \mG_t$
            \State $\widehat{\mM}_t \gets \mM_t / \|\mM_t\|_F$ \hfill \Comment{Normalize for Numerical Stability}

            \Statex \textcolor{gray}{\textbf{\itshape // 1. Spectral Geometry Analysis}}
            \State $(\evsigma_{t}, \vu_{t}, \vv_{t}) \gets \mathrm{PowerIteration}(\mW_{t})$ \hfill \Comment{Top Singular Value \& Vectors}
            \State $\mTheta_t \gets \vu_t \vv_t^\top$ \hfill \Comment{Tangent Space Projector}

            \Statex \textcolor{gray}{\textbf{\itshape // 2. Retraction to Spectral Sphere}}
            \State $\mW_t \gets \mW_t \cdot R / \evsigma_{t}$

            \Statex \textcolor{gray}{\textbf{\itshape // 3. Steepest Descent Lagrange Solver}}
            \State Define $h(\lambda) \coloneqq \langle \mTheta_t, \operatorname{msign}(\widehat{\mM}_t + \lambda \mTheta_t) \rangle$
            \State $\lambda_t^* \gets \mathrm{Bisection}(h, \text{tolerance}=\epsilon)$ \hfill \Comment{Find root of $h(\lambda)=0$}

            \Statex \textcolor{gray}{\textbf{\itshape // 4. $\boldsymbol{\mu}$P-Scaled Update}}
            \State $\mPhi_t \gets \operatorname{msign}(\widehat{\mM}_t + \lambda_t^* \mTheta_t)$
            \State $\mW_{t+1} \gets \mW_t - \eta \cdot R \cdot \mPhi_t$ \hfill \Comment{$\boldsymbol{\mu}$P Style Update}
        \EndFor
    \end{algorithmic}
\end{algorithm}

In this paper, we set AdamW and Muon as baselines. Additionally, we introduce MuonSphere, a variant similar to Scion~\citep{pethick2025trainingdeeplearningmodels}\footnote{Unlike Scion, which applies ColNorm$\to$Spectral$\to$Sign ($l_{\infty}$) norm chain throughout the network, MuonSphere retains Sign$\to$Spectral$\to$Sign norm scheme. We find ColNorm input hurts performance.}, which can be viewed as Spectral Sphere with $\lambda=0$. While Spectral Sphere follows a steepest-descent approach, MuonSphere simply normalizes 2D hidden weights onto the spectral sphere before each update. \cref{fig:sketch} provides a sketch of the latter three spectral-preconditioned optimizers' update.
\begin{figure}[ht]
    \centering
    \includegraphics[scale=0.2]{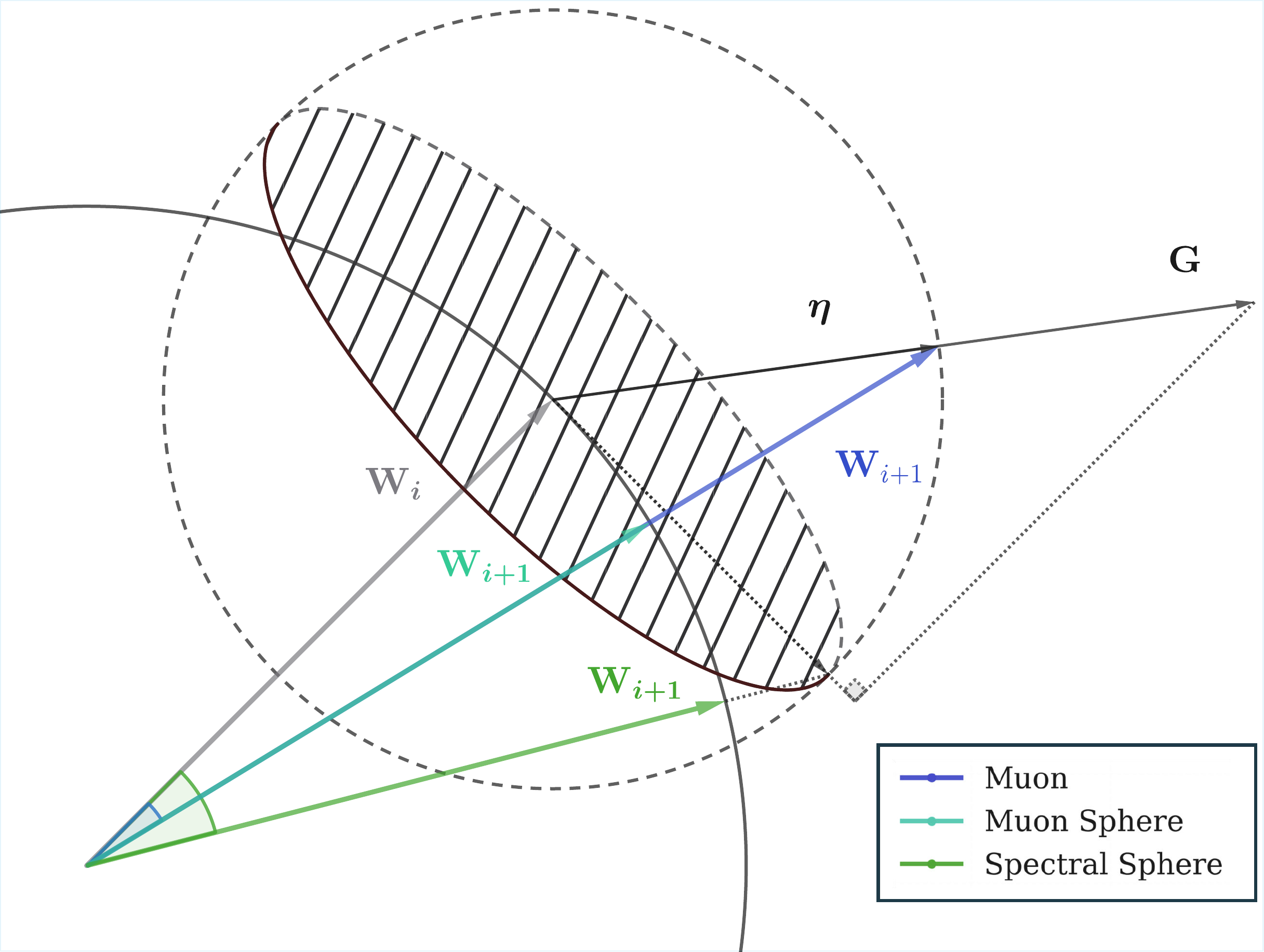}
    \caption{\textbf{Geometry of Steepest Descent Update Directions.}
    The left solid arc denotes the $\mW$ sphere, while the right dotted arc denotes the $\Delta \mW$ sphere (unit $\mPhi$ scaled by $\eta$).
    The shaded region represents the feasible set within the \emph{tangent space} of the $\mW$ sphere at step $\mW_i$.
    Under weight constraint, projecting $\mG$ onto the tangent space (\textcolor{SpectralSphereColor}{Spectral Sphere}) yields the largest update angle.}
    \label{fig:sketch}
\end{figure}

\section{Algorithm Details}
\label{sec:algo_details}

\subsection{Spectral Radius Scale}
\label{sec:spectral_radius}

Given the target spectral radius
\[
R = \Theta\left(\sqrt{d_{\text{out}}/d_{\text{in}}}\right)
  = c \left(\sqrt{d_{\text{out}}/d_{\text{in}}}\right),
\]
the constant \(c\) serves as a scalar that sets the branch output magnitude
relative to the residual stream. By tuning \(c\), one can precisely control
the signal-to-noise ratio along the deep residual path, balancing the
contributions of the Attention/FFN blocks against the skip connection.
Properly choosing \(c\) is therefore essential for stabilizing depth-wise
signal propagation in Transformers.

\begin{figure}[ht]
    \centering
    \includegraphics[width=0.7\linewidth]{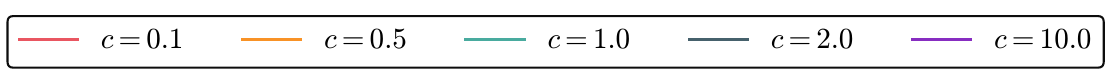}
    \begin{subfigure}{0.32\linewidth}
        \centering
        \includegraphics[width=\linewidth]{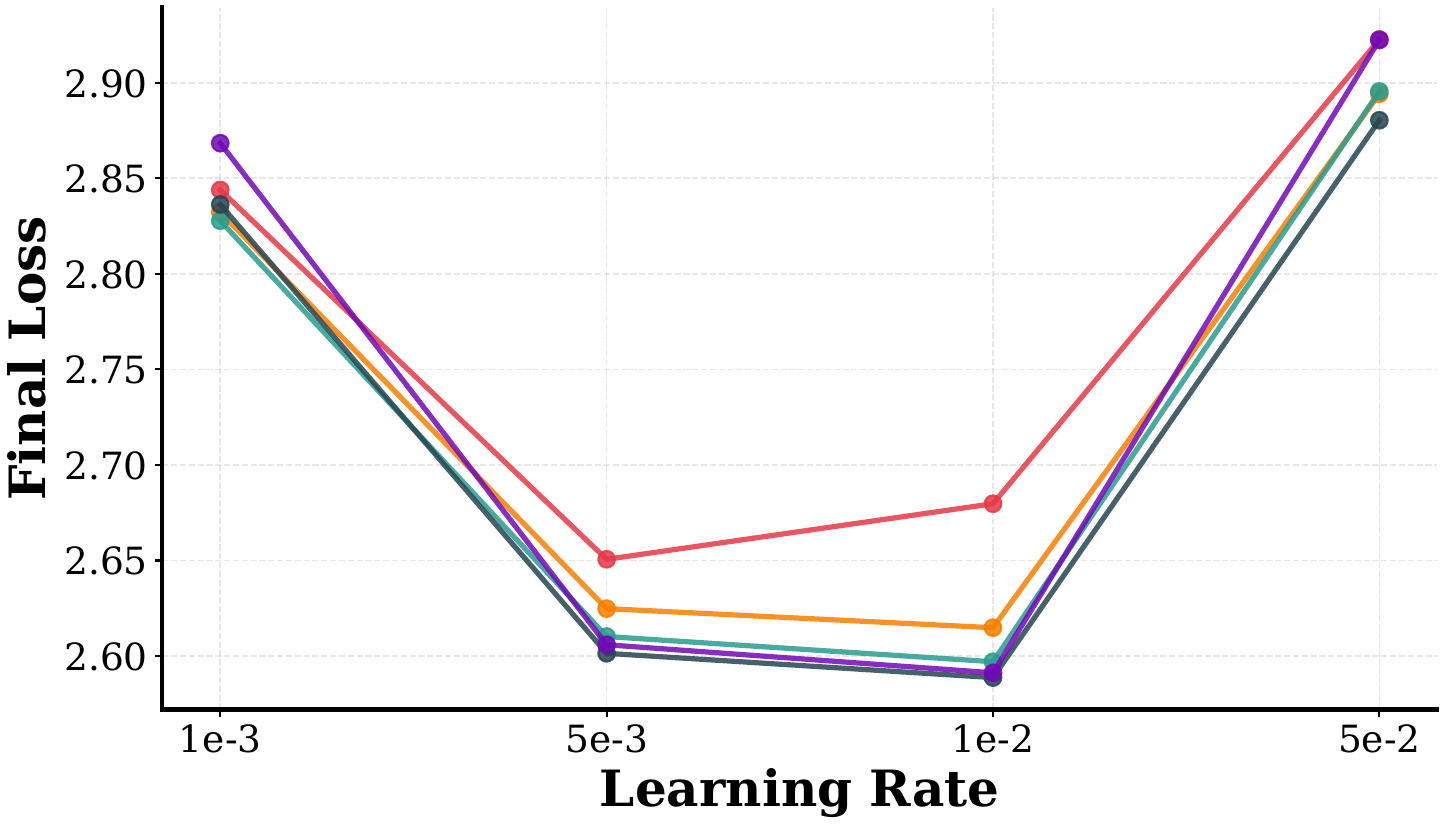}
        \caption{Radius scale search.}
        \label{fig:radius_search}
    \end{subfigure}
    \hfill
    \begin{subfigure}{0.32\linewidth}
        \centering
        \includegraphics[width=\linewidth]{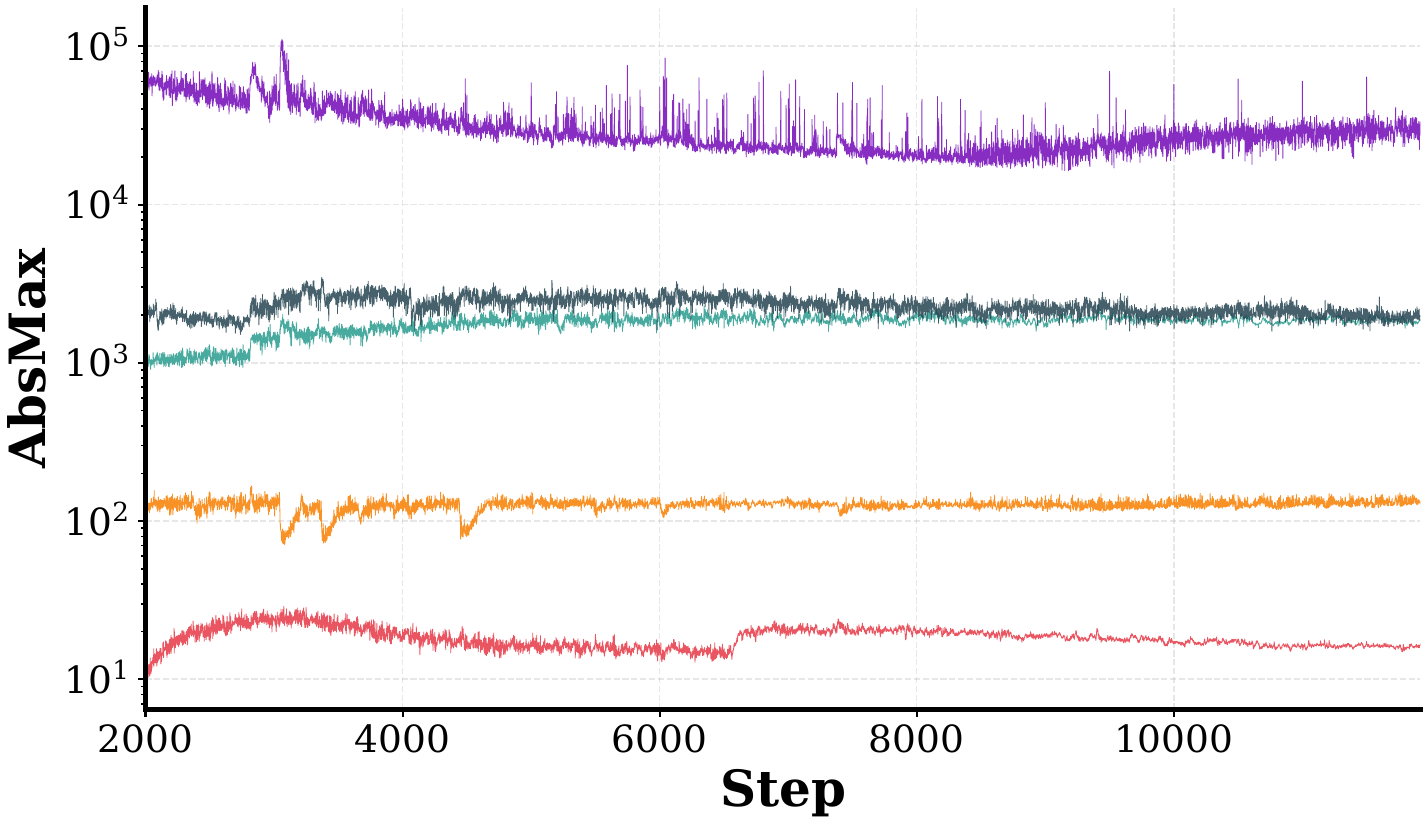}
        \caption{\texttt{AbsMax} of FFN activations.}
        \label{fig:radius_mlp_absmax}
    \end{subfigure}
    \hfill
    \begin{subfigure}{0.32\linewidth}
        \centering
        \includegraphics[width=\linewidth]{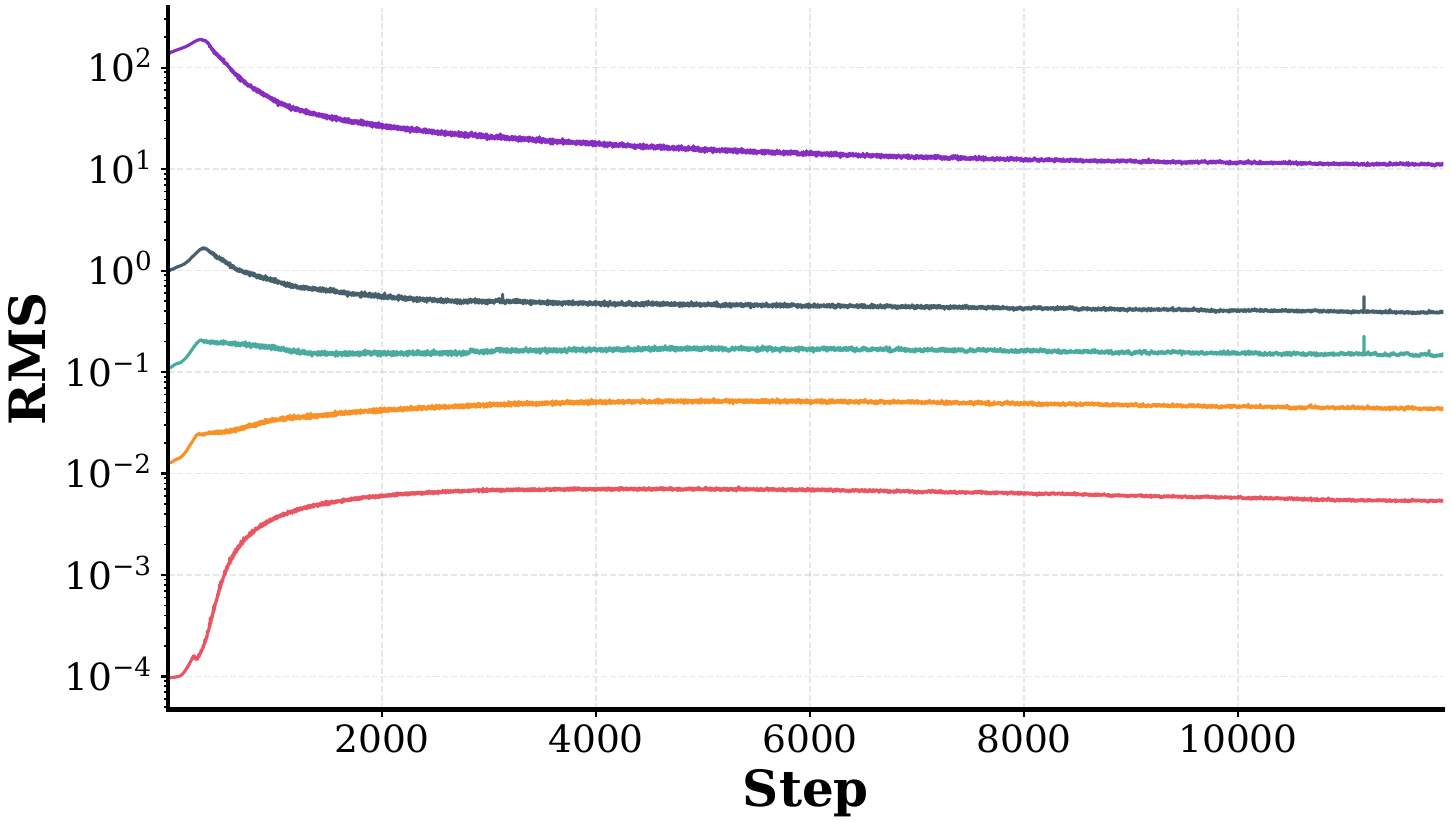}
        \caption{\texttt{RMS} of FFN activations.}
        \label{fig:radius_mlp_rms}
    \end{subfigure}
    \caption{\textbf{Ablation of radius scaling on \textit{optimization} and \textit{activation}.} 
    (a) Final loss vs. learning rate for varying radius scales $c$. A moderate scale (e.g. $c=2.0$) achieves the best performance.
    (b) AbsMax and (c) RMS of FFN activations during training. AbsMax monotonically follows the radius scale, whereas RMS follows a clear power-law scaling with $c$.
    }
    \label{fig:radius_all}
\end{figure}

\subsection{Learning Rate Scaler}
\label{sec:lr_scaler}

We propose a unified view in which each learning rate scaler enforces a consistent \textbf{effective step size} under a chosen \textbf{norm metric} and \textbf{initialization scheme}. 

In the update rule $\mW \gets \mW - \eta R \mPhi$, $R$ scales the update size. To avoid instability caused by heterogeneous layer shapes, we select $R$ to maintain a constant \emph{effective step size}~---~defined as the ratio of update-to-weight magnitude under a norm metric $\|\!\cdot\!\|$ :
\begin{equation}
    \frac{\|\Delta \mW\|}{\|\mW\|} = \frac{\|\eta R \mPhi\|}{\|\mW\|} \approx \eta.
\end{equation}

We evaluate three common learning rate scalers below: 
\begin{equation}
    R =
\begin{cases}
\sqrt{d_{\mathrm{out}}/d_{\mathrm{in}}}, 
    & \text{(\textit{Spectral $\boldsymbol{\mu}$P Scaler)}} \\[6pt]
 \sqrt{\max(d_{\mathrm{out}}, d_{\mathrm{in}})} \cdot 0.2 , 
    & \text{(\textit{Align-Adam-RMS Scaler})} \\[6pt]
\sqrt{\max(1, d_{\mathrm{out}}/d_{\mathrm{in}})}, 
    & \text{(\textit{Spectral Kaiming Scaler})}
\end{cases}
\label{eq:lrscaler}
\end{equation}

\begin{figure}[ht]
    \centering
    \includegraphics[width=0.5\linewidth]{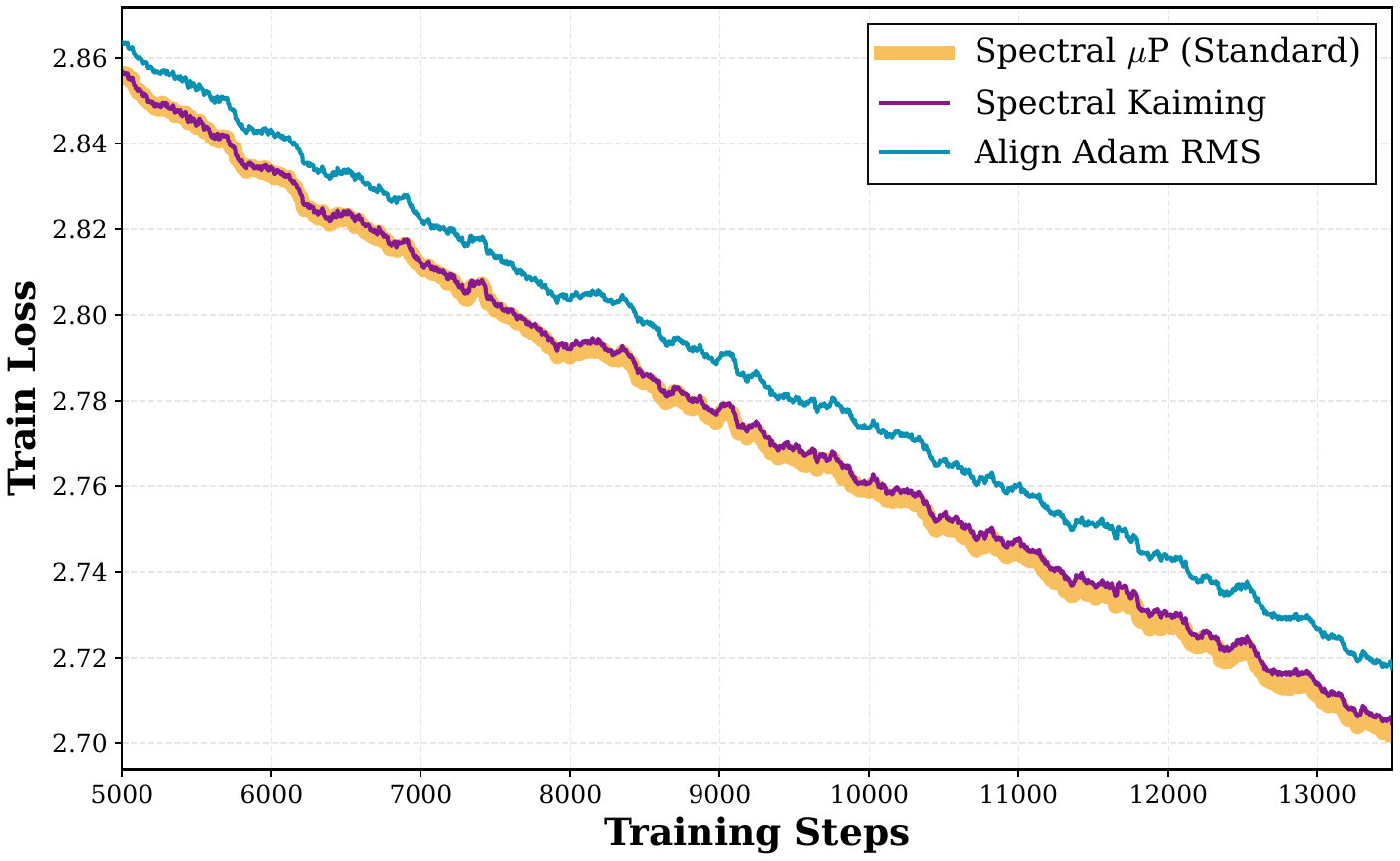}
    \caption{
    \textbf{Ablation of learning rate scalers.} Each curve represents the optimal validation loss obtained via a learning rate grid search. \textbf{Spectral $\boldsymbol{\mu}$P} (yellow) outperforms \textbf{Align-Adam-RMS} (blue), validating the optimality of $\boldsymbol{\mu}$P-aligned scaling under the Spectral $\boldsymbol{\mu}$P condition.
    }
    \label{fig:lr_scaler}
\end{figure}

\begin{itemize}[leftmargin=*]

    \item \textbf{Spectral $\boldsymbol{\mu}$P Scaler.}
    Enforces the \emph{RMS-to-RMS operator norm} invariance from~\cref{sec:bg_mup}.
    It ensures that both $\mW$ and $\Delta\mW$ satisfy the $\boldsymbol{\mu}$P scaling $\|\mW\|_2=\Theta(\sqrt{d_{\mathrm{out}}/d_{\mathrm{in}}})$, 
    forming the geometric basis of our Spectral Sphere Optimizer.

    \item \textbf{Align-Adam-RMS Scaler.}
    A heuristic for consistent relative learning rates in the \emph{RMS norm}
    under fixed standard-deviation initialization.
    Empirically, it aligns per-layer update RMSnorm to AdamW,
    enabling direct transfer of AdamW-tuned hyperparameters
    (e.g. learning rate, weight decay)
    to the spectral method~\citep{liu2025muonscalablellmtraining}.

    \item \textbf{Spectral Kaiming Scaler.}
    Targets the \emph{spectral norm} under Kaiming initialization
    ($\mW \sim \mathcal{N}(0, 1/d_{\mathrm{in}})$)~\citep{He2015Initialization}.
    Random matrix theory establishes that such matrices satisfy
    $\|\mW\|_2 \approx 1 + \sqrt{d_{\mathrm{out}}/d_{\mathrm{in}}}$.
    This scaling prevents vanishing pre-activations in bottleneck layers 
    where $d_{\mathrm{out}} \ll d_{\mathrm{in}}$~\citep{pethick2025trainingdeeplearningmodels}.

\end{itemize}

Experimental results (\cref{fig:lr_scaler}) favor the \textbf{Spectral $\boldsymbol{\mu}$P} scaler, supporting the theoretical scaling condition in~\cref{sec:bg_mup}. This demonstrates that \emph{optimizing on a spectral manifold requires a scaler explicitly calibrated to the spectral norm.}

\subsection{Module Granularity }
\label{sec:module_gran}
\begin{figure}[ht]
    \centering
    \includegraphics[width=0.5\linewidth]{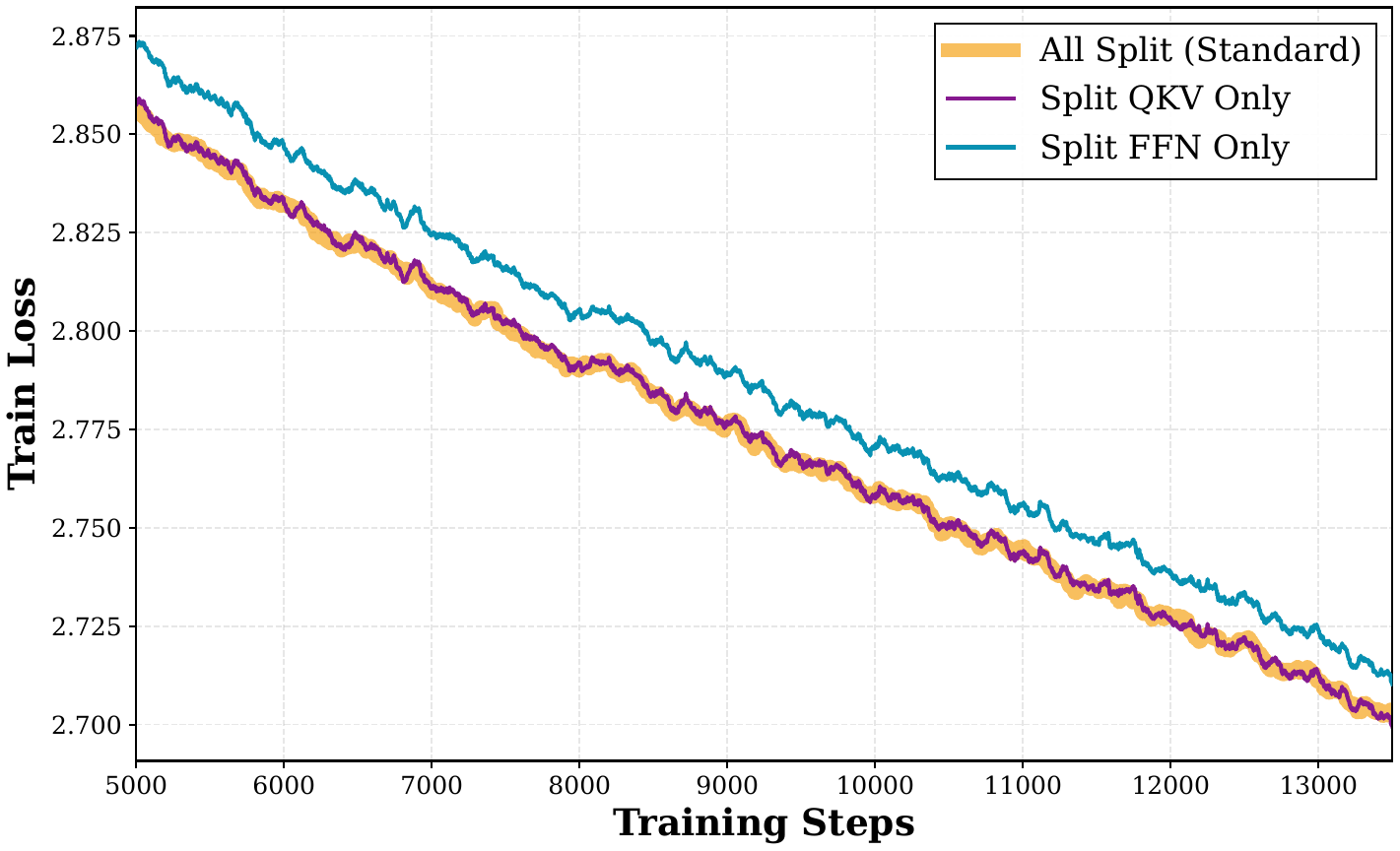}
    \caption{
    \textbf{Ablation of module granularity for \emph{initialization} and \emph{optimization}.} 
    Splitting QKV per head alone yields the most significant performance gain. 
    Although splitting the FFN gate/up weights produces nearly the same loss curve as no-split, we maintain this split to respect their distinct functional roles; the overlapping curve is omitted from the figure.
    }
    \label{fig:granularity}
\end{figure}
To maximize computational efficiency, modern Transformer implementations such as Megatron-LM~\citep{megatron-lm} fuse several matrices into a single physical tensor (e.g. the QKV projections or the SwiGLU gate/up matrices).
However, since these modules have distinct functional roles, imposing a unified constraint on the fused tensor is suboptimal as shown in~\cref{fig:granularity}.
Instead, we decompose fused tensors and treat each submatrix as an independent module.
We apply \textbf{per module spectral initialization and optimization} at this finer granularity by default (e.g. splitting attention QKV per head and separating FFN gate/up). Note that granularity is tunable depending on infra speed.

Under our spectral $\boldsymbol{\mu}$P initialization scheme (also discussed in~\cref{sec:lr_scaler}), the weight matrix is constructed by first sampling
$\mW_k \sim \mathcal{N}(0,\sigma^2)$ and subsequently projecting it onto the spectral sphere:
\[
\mW =
c \sqrt{\frac{d_{\mathrm{out}}}{d_{\mathrm{in}}}}
\cdot
\frac{\mW_k}{\|\mW_k\|_2}.
\]

\section{Infrastructure Design} 
\label{sec:system}
\subsection{Bottleneck Analysis}

The main challenge in \textbf{SSO} implementation comes from the \textbf{bracket-and-bisect} root solver that runs at every update. To satisfy the tangent-space constraint, we must find a Lagrange multiplier $\lambda$ such that $h(\lambda)=0$. We first expand the search interval by \textbf{bracketing}, and then we run \textbf{bisection} inside the bracket to obtain a $\lambda^{\star}$ that meets a target tolerance. This solver introduces non-trivial overhead:
\begin{enumerate}[left=0pt]
    \item \textbf{Workload Imbalance:} different bracketing range and tolerance settings can change the number of search and bisection steps, leading to unstable runtime and workload imbalance between devices.
    \item \textbf{Computational Cost:} Each step in the search also evaluates $h(\lambda)$, which calls $\operatorname{msign}(\widehat{\mM}+\lambda\mTheta)$; these extra matrix computations accumulate and add noticeable cost to every optimizer step.
    \item \textbf{Synchronization Overhead:} Iterative search creates frequent synchronization between the GPU and the CPU, because the algorithm must finish each evaluation and then check a scalar condition before choosing the next $\lambda$ and continuing.
\end{enumerate}

\subsection{Optimization Pipeline}

We introduce a holistic optimization pipeline designed to mitigate these overheads while preserving numerical precision. \textbf{All performance statistics are collected from Dense 1.7B model pretraining.}

\paragraph{Atomic Module Sharding.}
We employ a fine-grained, parameter-wise sharding strategy \citep{emergeing_optimizer}. As noted by \citep{liu2025muonscalablellmtraining}, while standard ZeRO-1 is efficient for element-wise optimizers (e.g. AdamW), its flat-buffer sharding approach is incompatible with spectral methods that require full gradient matrices to compute updates. To reconcile this, we shard parameters as \emph{atomic modules} rather than flattened buffers. An atomic module is defined as the minimal independent weight matrix required to remain intact for spectral operations (see~\cref{sec:module_gran}).

\paragraph{Load Balancing Strategy.}
To resolve the workload imbalance caused by variable solver depths, we employ a ``ping-pong'' load balancing strategy adapted from \citep{emergeing_optimizer}. Empirical results indicate this method outperforms both greedy size-descent sorting and default round-robin allocation. We sort atomic modules by size and assign them to DP ranks in an alternating zigzag pattern. This heuristic effectively balancing the solver workload without complex runtime scheduling.
Finally, we synchronize the updated params using iterative All-Gather collective from \cite{emergeing_optimizer}.
\paragraph{Adaptive Kernel Selection.}
We observe that the optimal implementation for Matrix Sign computation is highly sensitive to matrix dimensions. As shown in~\cref{tab:perf_breakdown}, applying specialized kernels indiscriminately can degrade performance. We therefore implement an Adaptive Dispatcher:
\begin{itemize}[left=0pt]
    \item \textbf{Small Matrices (\(<512\))}: We use a JIT-compiled PyTorch implementation built on \texttt{torch.addmm}. This avoids the launch overhead of specialized kernels.
    \item \textbf{Large Matrices ($\ge512$)}: We dispatch to a custom Triton kernel implementing the SYmmetric Rank-K (SYRK)  \cite{emergeing_optimizer} update, which exploits the symmetry of Newton--Schulz iterations to halve memory reads.
\end{itemize}

\paragraph{Multi-Stream Parallelism.}
For layers composed of many small independent matrices (e.g. per head attention split), single-stream execution suffers from kernel launch latency bubbles. We exploit this independence by dispatching spectral updates across multiple CUDA streams.

\paragraph{Mixed-Precision.}
The Power Iteration for spectral norm estimation is performed in BFloat16, while the sensitive \textbf{msign remains in FP32 with 8 iterations}. 

\paragraph{Cache Top Singular Vectors.}
The singular vectors of model weights evolve slowly during training. Leveraging this temporal locality, we initialize the current Power Iteration using the cached singular vectors $u$ and $v$ from the previous step. This reuse mechanism drastically accelerates convergence, requiring only a few iterations to maintain approximation accuracy.

\begin{table}[hbt!]
\centering
\caption{Optimization breakdown on end-to-end latency
for 4M tokens/step on NVIDIA B200.
\goodarrow~denotes improvement, while~\badarrow~denotes regression. Note there is still room for improvement.}
\label{tab:perf_breakdown}
\resizebox{0.9\columnwidth}{!}{

\begin{tabular}{l r r r}
\toprule
   & \multicolumn{1}{c}{Time (ms)} & \multicolumn{1}{c}{$\Delta$ vs Baseline} & \multicolumn{1}{c}{$\Delta$ vs Prev.} \\ 
\midrule
Naive Baseline (No opt.) & 10928.5  & \multicolumn{1}{c}{-}                              & \multicolumn{1}{c}{-}                           \\
+ Load balance \& All Gather   & 9365.5 & -1563.0 (-14.3\%)~\goodarrow & -1563.0 (-14.3\%)~\goodarrow \\
+ Triton SYRK Kernel      & 10284.2  & -644.3 \hspace{0.5em}(-5.9\%)~\goodarrow   & +918.7 \hspace{0.3em}(+9.8\%)~\badarrow \\
+ Adaptive \& Multi-stream       & 9383.4   & -1545.1 (-14.2\%)~\goodarrow & -900.8 \hspace{0.5em}(-8.8\%)~\goodarrow \\ 
+ BF16 \& Torch.compile (\textbf{Final}) & \textbf{7666.3}   & \textbf{-3262.2 (-29.9\%)}~\goodarrow & \textbf{-1717.1 (-18.3\%)}~\goodarrow \\ 
\bottomrule
\end{tabular}%
}
\end{table}

\begin{table}[ht]
\centering
\caption{End-to-end per-step latency for 4M tokens/step on NVIDIA B200.  \textcolor{MuonColor}{Muon} as baseline.}
\label{tab:perf_end2end}
\resizebox{0.9\columnwidth}{!}{%
\begin{tabular}{ccccc}
\toprule
 & \textcolor{AdamWColor}{AdamW}
 & \textcolor{MuonColor}{Muon}
 & \textcolor{MuonSphereColor}{MuonSphere}
 & \textcolor{SpectralSphereColor}{Spectral Sphere} \\
\midrule
Time (ms) & 6734.15 (-2.10\%) & 6878.83 & 6949.85 (+1.03\%) & 7666.32 (+11.45\%) \\
\bottomrule
\end{tabular}%
}
\end{table}

\subsection{Future Improvements} 
\label{sec:infra_future}
\begin{itemize}[left=0pt]
    \item \textbf{GPU-Native Solver.} 
    Profiling indicates that the current CPU-based bisection solver introduces latency due to frequent device-host synchronization. Although the bracketing phase converges rapidly ($<2$ steps), the bisection phase averages 5--7 steps, creating synchronization bubbles. Future work will prioritize implementing a pure GPU-native solver to eliminate these overheads. Additionally, we plan to adopt higher convergence algorithms,  such as Brent's method or $n$-section search, and optimize initial bracketing intervals to minimize msign calls.

    \item \textbf{Kernel Optimization.}
    The theoretical advantage of SSO relies on the accuracy of the tangent space projection; when errors are high, the update direction may degrade to the ``worst-case'' Muon update. Currently, we ensure precision via 8 iterations of msign in FP32. To follow standard Muon practices, we plan to use msign in BFloat16 within 5 steps. Furthermore, we intend to replace current JIT-compiled operations with custom, fully optimized kernels (e.g. for batched msign and Power Iterations) to better exploit the hardware features of next-generation GPUs.

    \item \textbf{Low-precision Training.}
    While weight matrics are spectral constrained, we find the residual stream remains the primary source of outliers. We aim to explore ``fully manifold constrained'' architectures, i.e. using mHC \citep{xie2025mhcmanifoldconstrainedhyperconnections}. Additionally, given SSO's demonstrated stability, we plan to explore low-precision training (e.g. FP8/NVFP4) to leverage the high throughput of low-bit arithmetic for better training efficiency. 
\end{itemize}

\section{Scaling Experiments}

\subsection{Experimental Setup}

\paragraph{Hyperparameters.}  
Following the $\boldsymbol{\mu}$P protocol, we perform a learning rate sweep on the 1.7B model with AdamW between [$10^{-3}$, $10^{-2}$] and find $5 \times 10^{-3}$ to be the optimal value. 
Training uses 500 warmup steps, a global batch size of $\sim$4M tokens (1024 sequences $\times$ 4096 tokens each), and a cosine learning rate decay reduced to $10\%$ peak. We use BF16 mixed-precision training.

Following Kimi Moonlight~\citep{liu2025muonscalablellmtraining}, all optimizers use a weight decay of 0.1. However, distinct from their approach of aligning updates to AdamW RMS (intended to reuse current scaling laws), we find that spectral $\boldsymbol{\mu}$P LR scaler outperforms the uniform 0.2 update-rms alignment (\cref{sec:lr_scaler}).
For msign coefficients, we use the Polar Express method~\citep{amsel2025polarexpressoptimalmatrix} with 8 Newton–Schulz iterations\footnote{We tested 5 and 8 iterations as well as different msign coefficients and observed negligible differences in training loss (< 1e-3). We retain 8 iterations for improved numerical accuracy.}.
We employ Nesterov momentum and, by default, split attention heads and FFN gate/up projections for separate initialization and optimization. For Spectral Sphere, we set the $\lambda$-solver's maximum iterations to 20, Lagrange solver precision tolerance to 2e-4, and remove weight decay for all hidden 2D weights, as retraction maintains the weight constraint (\cref{sec:retraction}).

\paragraph{Training Data.}
We use the OLMo-Mix-1124 dataset~\citep{olmo20252olmo2furious}, tokenized with the OLMo-2 tokenizer. We randomly sample 100 billion tokens for training and reserve 1 billion tokens for validation. Data indices are built offline to guarantee a deterministic training order.

\paragraph{Benchmarks.}
We primarily evaluate on the following downstream tasks: ARC~\citep{clark2018arc}, BoolQ~\citep{clark2019boolq}, CSQA~\citep{talmor2019csqa}, HellaSwag~\citep{zellers2019hellaswag}, PIQA~\citep{bisk2020piqa}, WinoGrande~\citep{sakaguchi2021winogrande} and LAMBADA~\citep{paperno2016lambada}.

\subsection{Dense 1.7B}
\label{subsec:dense_1.7b}

We adopt the architecture configuration of Qwen3-1.7B~\citep{yang2025qwen3technicalreport}, replacing its original tokenizer with that of OLMo-2~\citep{olmo20252olmo2furious}. The core architecture utilizes Grouped Query Attention (GQA), QK-Norm, SwiGLU activations, Rotary Positional Embeddings (RoPE), and pre-normalization RMSNorm. We do not tie the embedding and the head.

\begin{figure}[ht]
    \centering
    \includegraphics[width=0.5\linewidth]{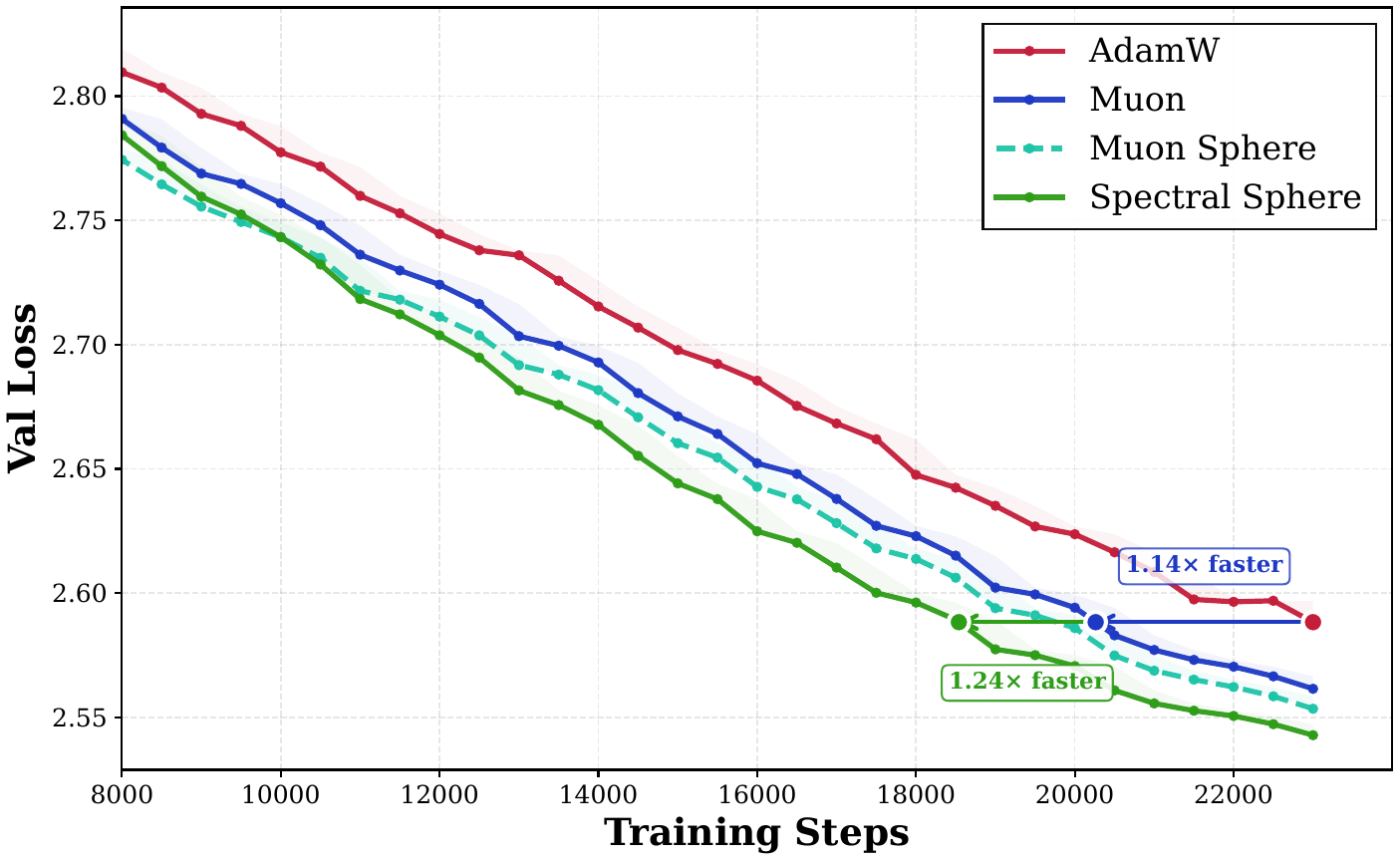}
    \caption{
    \textbf{Validation loss of training dense 1.7B model on 100B tokens.}
    As a reference point, \textcolor{AdamWColor}{AdamW} attains a final validation loss of 2.588 at 23k steps. The overall setup favors \textcolor{AdamWColor}{AdamW}, since the learning rate is set for it (5e-3), rather than the higher optimal rate (1e-2) for \textcolor{MuonColor}{Muon} and \textcolor{SpectralSphereColor}{Spectral Sphere}. Even under this setting, spectral-based optimizers exhibit higher efficiency: \textcolor{MuonColor}{Muon} reaches the same validation loss level in 12\% fewer steps, while \textcolor{SpectralSphereColor}{Spectral Sphere} does so in 19\% fewer steps.
    }
    \label{fig:qwe3_1.7B_dense}
\end{figure}

\begin{table}[ht]
\centering
\caption{Performance comparison of different optimizers on Dense 1.7B models.}
\label{tab:commonsense_results}
\resizebox{\linewidth}{!}{
\begin{tabular}{l|c|cccccccc|c}
\toprule
\multirow{2}{*}{\textbf{Optimizer}}    & \textbf{LMB.} & \textbf{LMB.} & \textbf{CSQA} & \textbf{PIQA} &    \textbf{Hella.} & \textbf{Wino.} & \textbf{ARC-e} &  \textbf{ARC-c}  & \textbf{BoolQ} &  \textbf{Avg.} \\
 &  (PPL) $\downarrow$ &  (Acc) $\uparrow$  & (Acc) $\uparrow$ &   (Acc) $\uparrow$  & (Acc) $\uparrow$  & (Acc) $\uparrow$ & (Acc) $\uparrow$ & (Acc) $\uparrow$ & (Acc) $\uparrow$ & (Acc) $\uparrow$ \\
\midrule
\textcolor{AdamWColor}{AdamW} & 5.40 & 63.71 & 19.66 & 74.70 & 47.90 & 62.59 & 68.81 & 37.37 & 63.24 & 54.75\\
\textcolor{MuonColor}{Muon}   & 5.05 & 65.19 & 19.00 & 75.35 & 48.91 & 61.72 & 70.24 & 37.46 & 64.22 & 55.26\\
\textcolor{MuonSphereColor}{MuonSphere}  & \textbf{4.87} & \textbf{65.55} & 20.07 & 74.97 & 49.20 & 62.83 & 71.51 & \textbf{38.40} & \textbf{66.97} & 56.19\\
\textcolor{SpectralSphereColor}{Spectral Sphere} & 5.00 & 65.07 & \textbf{21.05} & \textbf{75.95} & \textbf{49.25} & \textbf{63.77} & \textbf{71.80} & 38.31 & 65.57 & \textbf{56.35} \\
\bottomrule
\end{tabular}
}
\end{table}

\subsection{MoE 8B-A1B}

The configuration largely follows DeepSeek-V3~\citep{deepseekai2025deepseekv3technicalreport}. The model has 27 layers: the first is a standard dense FFN, followed by 26 MoE layers. We utilize 64 experts in total, with a top-4 routing expert plus 1 shared expert.

\textbf{Router and Load Balancing.} The router is implemented in FP32 precision. We adopt the auxiliary-loss-free strategy~\citep{wang2024auxiliarylossfreeloadbalancingstrategy}, which demonstrated superior expert load balancing compared to global auxiliary loss~\citep{qiu2025demonsdetailimplementingload} in our ablations. We enable expert bias with an update rate of 0.001. The sequence-level auxiliary loss is removed, as we find it redundant when expert bias is enabled. we use a sigmoid gate with top-$k$ scores renormalized and scaled by 2. (see~\cref{app:moe_scaling}). To evaluate router load balancing, we employ Maximal Violation (MaxVio)~\citep{wang2024auxiliarylossfreeloadbalancingstrategy}, where $0$ indicates perfect balance. As
shown in~\cref{fig:moe_8b_a_1b_maxvio_valloss}, weight spectral normalization
effectively promotes balanced routing, leading to superior validation loss.
\begin{figure}[ht]
    \centering
    \begin{subfigure}{0.48\linewidth}
        \centering
        \includegraphics[width=\linewidth]{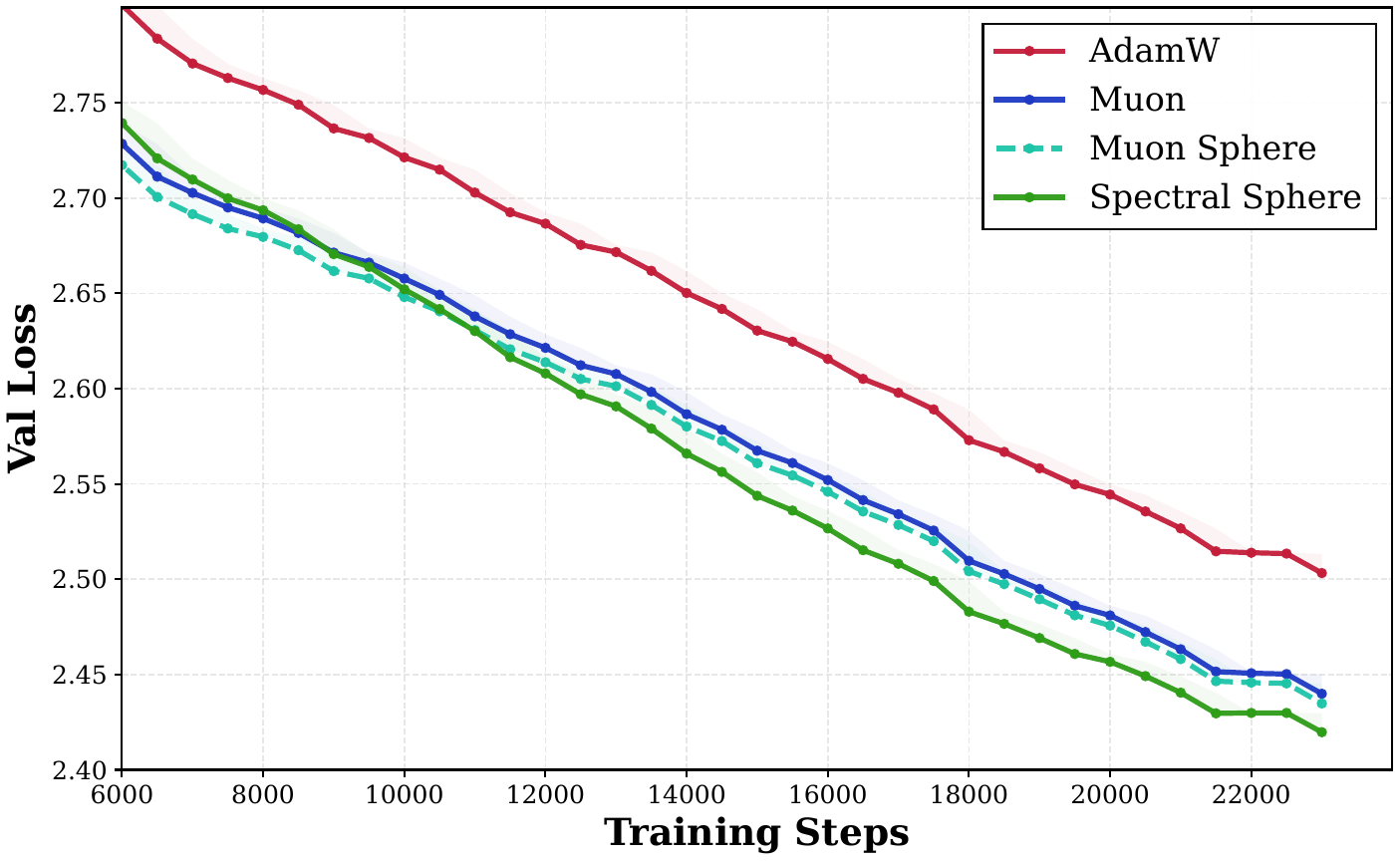}
        \caption{Validation Loss across four optimizers.}
    \end{subfigure}
    \hfill
    \begin{subfigure}{0.48\linewidth}
        \centering
        \includegraphics[width=\linewidth]{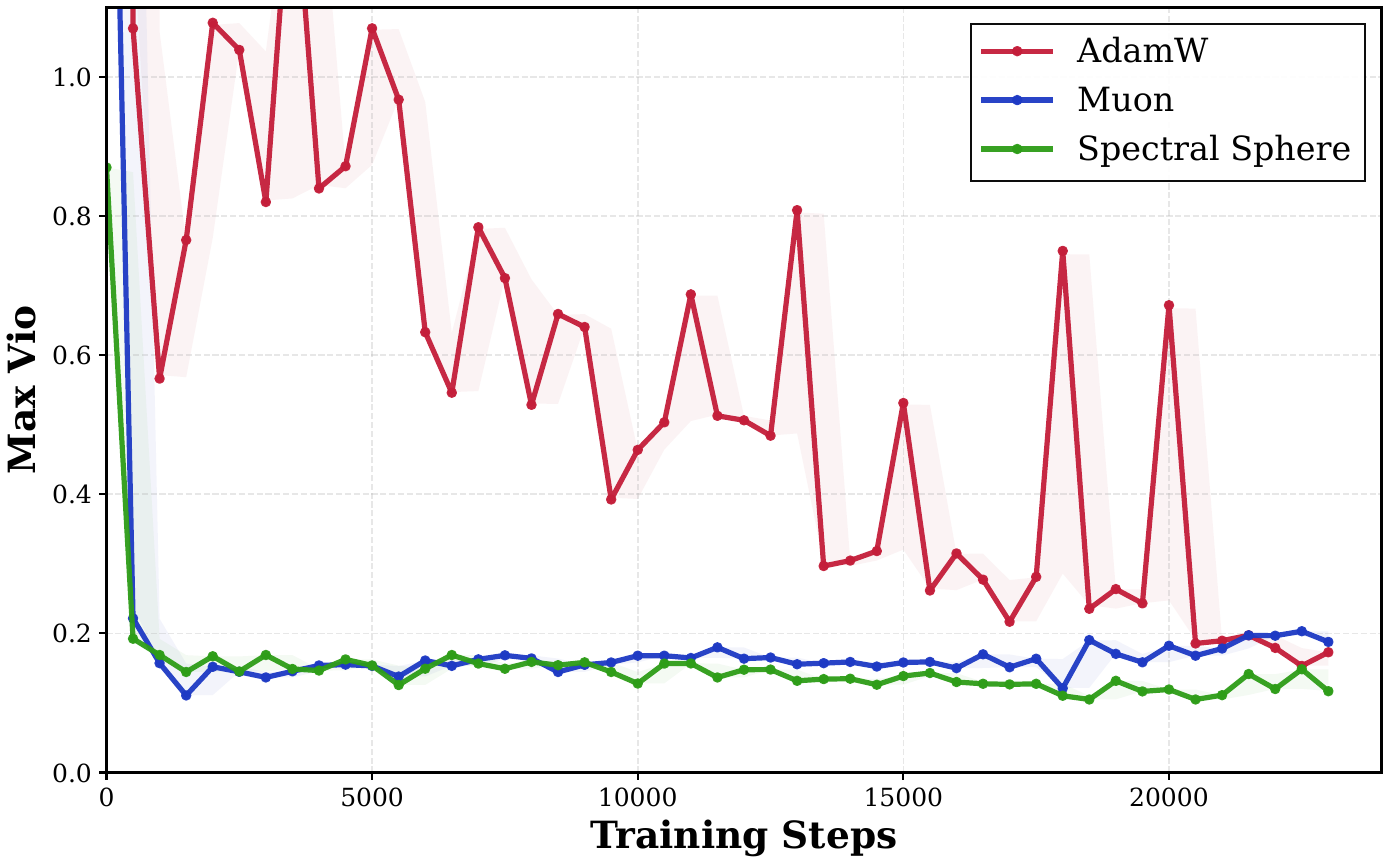}
        \caption{MaxVio as a MoE load-balance metric.}
    \end{subfigure}
    \caption{\textbf{MoE 8B-A1B training.} \textcolor{SpectralSphereColor}{Spectral Sphere} achieves the lowest validation loss while maintaining the best expert load balance. In contrast, \textcolor{AdamWColor}{AdamW} exhibits substantially larger MaxVio with frequent spikes, indicating unstable routing and poorer utilization of effective model capacity. Compared to \textcolor{MuonColor}{Muon}, constraining each expert on the spectral sphere further improves load balance.}
    \label{fig:moe_8b_a_1b_maxvio_valloss}
\end{figure}

\subsection{DeepNet 200-Layer}
To evaluate the stability of different optimizers under extreme depth, we extended the baseline's 28 layers to 200 layers. This deep and narrow serves as a stress test for stability. The training loss in~\cref{fig:deepnet} shows that Spectral Sphere outperforms baselines with lower loss and higher stability.
\begin{figure}[ht]
    \centering
    \includegraphics[width=0.5\linewidth]{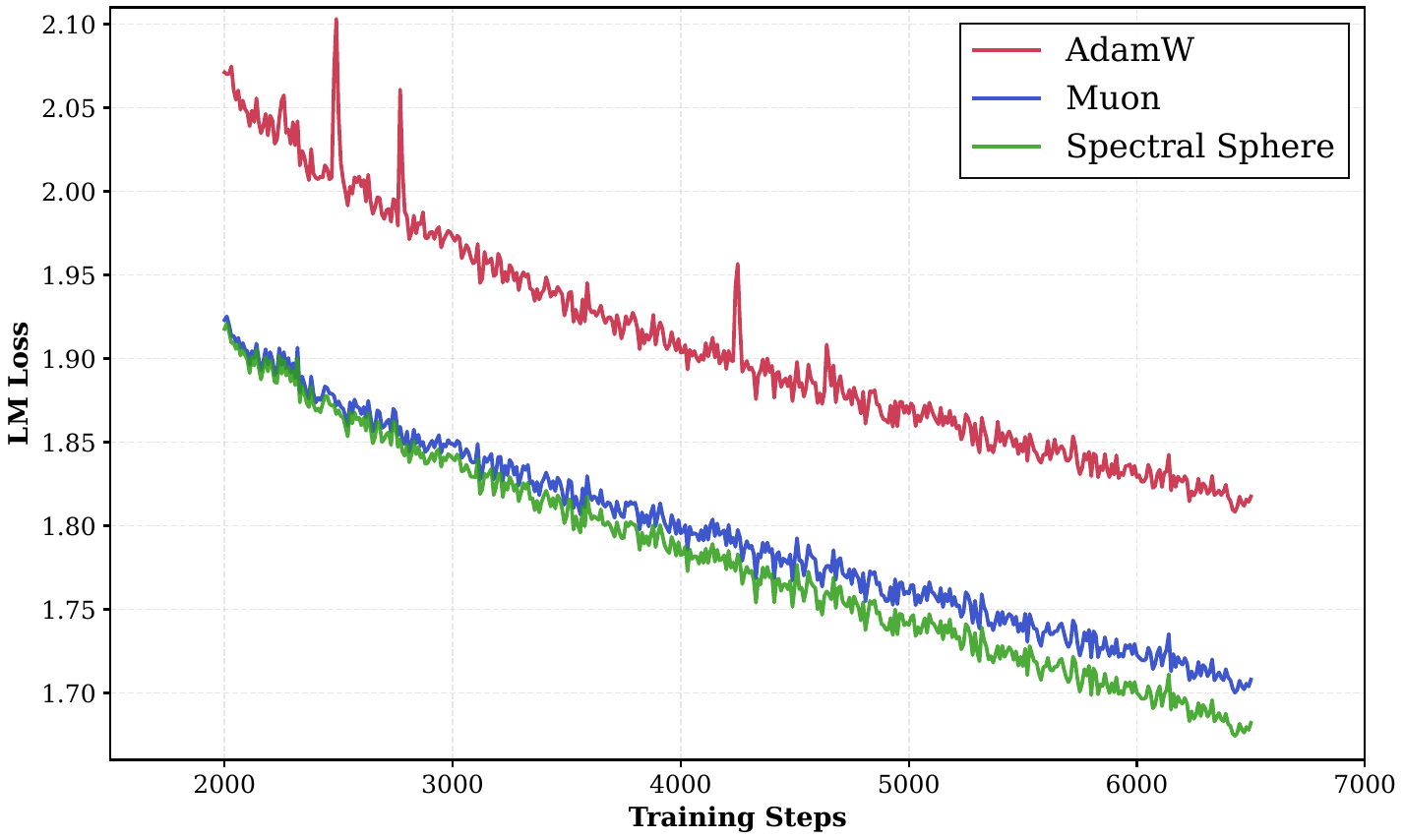}
    \caption{\textbf{Deepnet 200 layers training loss.} \textcolor{AdamWColor}{AdamW} shows pronounced instability, characterized by frequent loss spikes and a growing gap in performance relative to spectral-based optimizers. \textcolor{SpectralSphereColor}{Spectral Sphere} attains both the lowest loss and the highest stability.}
    \label{fig:deepnet}
\end{figure}

\section{Discussion}
\label{sec:discussion}

In this work, we propose a novel perspective on optimizer design grounded in spectral
$\boldsymbol{\mu}$P principles. As detailed in \cref{sec:bg_mup}, $\boldsymbol{\mu}$P provides a mathematical safeguard that strictly controls hidden layer activations at the desired scale. By identifying the spectral sphere as the natural geometry for stable feature learning,  we derive the Spectral Sphere Optimizer (SSO)~---~the unique solution for steepest descent constrained within both weight and update manifolds (\cref{sec:methods}). This formulation effectively achieves rapid convergence grounded in fundamental training stability. Empirically, SSO consistently outperforms AdamW and Muon, while uniquely preserving stable $\boldsymbol{\mu}$P learning rate transfer. Notably, it yields superior training dynamics: it significantly improves MoE router load balancing, suppresses outliers in deep networks, and strictly bounds activations within a tunable scale.

A critical distinction exists between our approach and emerging works on Stiefel manifold optimization~\citep{bernstein2025manifolds}: while Stiefel manifold requires \emph{all} singular values to be exactly 1, SSO constrains only the \emph{maximal} singular value. This relaxation allows the internal spectrum to evolve freely below the bound, avoiding the overly rigid isotropy of the Stiefel manifold.

Beyond the theoretical contribution, we provide a complete and systematic recipe (\cref{sec:algo_details}) implemented in Megatron-LM. Our guidelines on atomic granularity, ``ping-pong'' load balancing strategy, learning rate scaler, and spectral radius scale serve as a robust empirical practice for the broader class of spectral optimizers. Finally, while we acknowledge that the current root solver introduces non-trivial latency, we have outlined concrete pathways to mitigate this overhead in \cref{sec:infra_future}. For scenarios prioritizing infra cost, we recommend MuonSphere~---~a variant that retains equivalent activation control with minimal overhead.

\section*{Acknowledgements}

We sincerely thank Jianlin Su for his excellent blog, \href{https://kexue.fm}{Scientific Spaces}. We also thank Yifei Shen for his thoughtful discussions, and Franz Louis Cesista for helping clarify several conceptual formulations.

\newpage
\nocite{kexuefm-10770,kexuefm-10795,kexuefm-11241,yang2022tensor}
\bibliographystyle{plainnat} 
\bibliography{ref}

@inproceedings{loshchilov2019decoupled,
  title = {Decoupled Weight Decay Regularization},
  author = {Ilya Loshchilov and Frank Hutter},
  year = {2019},
  url = {https://openreview.net/forum?id=Bkg6RiCqY7},
  booktitle = {International Conference on Learning Representations (ICLR)},
}

@misc{yang2022tensor,
      title={Tensor Programs V: Tuning Large Neural Networks via Zero-Shot Hyperparameter Transfer}, 
      author={Greg Yang and Edward J. Hu and Igor Babuschkin and Szymon Sidor and Xiaodong Liu and David Farhi and Nick Ryder and Jakub Pachocki and Weizhu Chen and Jianfeng Gao},
      year={2022},
      eprint={2203.03466},
      archivePrefix={arXiv},
      primaryClass={cs.LG},
      url={https://arxiv.org/abs/2203.03466}, 
}

@misc{yang2023spectral,
      title={A Spectral Condition for Feature Learning}, 
      author={Greg Yang and James B. Simon and Jeremy Bernstein},
      year={2024},
      eprint={2310.17813},
      archivePrefix={arXiv},
      primaryClass={cs.LG},
      url={https://arxiv.org/abs/2310.17813}, 
}

@misc{jordan2024muon,
  author       = {Keller Jordan and Yuchen Jin and Vlado Boza and Jiacheng You and
                  Franz Cesista and Laker Newhouse and Jeremy Bernstein},
  title        = {Muon: An optimizer for hidden layers in neural networks},
  year         = {2024},
  url          = {https://kellerjordan.github.io/posts/muon/}
}

@misc{bernstein2024oldoptimizernewnorm,
      title={Old Optimizer, New Norm: An Anthology}, 
      author={Jeremy Bernstein and Laker Newhouse},
      year={2024},
      eprint={2409.20325},
      archivePrefix={arXiv},
      primaryClass={cs.LG},
      url={https://arxiv.org/abs/2409.20325}, 
}

@misc{han2025repeatedsingularvaluesrandom,
      title={Repeated singular values of a random symmetric matrix and decoupled singular value estimates}, 
      author={Yi Han},
      year={2025},
      eprint={2504.15992},
      archivePrefix={arXiv},
      primaryClass={math.PR},
      url={https://arxiv.org/abs/2504.15992}, 
}

@misc{tao2014randommatricessimplespectrum,
      title={Random matrices have simple spectrum}, 
      author={Terence Tao and Van Vu},
      year={2014},
      eprint={1412.1438},
      archivePrefix={arXiv},
      primaryClass={math.PR},
      url={https://arxiv.org/abs/1412.1438}, 
}

@misc{liu2025muonscalablellmtraining,
      title={Muon is Scalable for LLM Training}, 
      author={Jingyuan Liu and Jianlin Su and Xingcheng Yao and Zhejun Jiang and Guokun Lai and Yulun Du and Yidao Qin and Weixin Xu and Enzhe Lu and Junjie Yan and Yanru Chen and Huabin Zheng and Yibo Liu and Shaowei Liu and Bohong Yin and Weiran He and Han Zhu and Yuzhi Wang and Jianzhou Wang and Mengnan Dong and Zheng Zhang and Yongsheng Kang and Hao Zhang and Xinran Xu and Yutao Zhang and Yuxin Wu and Xinyu Zhou and Zhilin Yang},
      year={2025},
      eprint={2502.16982},
      archivePrefix={arXiv},
      primaryClass={cs.LG},
      url={https://arxiv.org/abs/2502.16982}, 
}

@misc{amsel2025polarexpressoptimalmatrix,
      title={The Polar Express: Optimal Matrix Sign Methods and Their Application to the Muon Algorithm}, 
      author={Noah Amsel and David Persson and Christopher Musco and Robert M. Gower},
      year={2025},
      eprint={2505.16932},
      archivePrefix={arXiv},
      primaryClass={cs.LG},
      url={https://arxiv.org/abs/2505.16932}, 
}

@misc{olmo20252olmo2furious,
      title={2 OLMo 2 Furious}, 
      author={{Team OLMo} and Pete Walsh and Luca Soldaini and Dirk Groeneveld and Kyle Lo and Shane Arora and Akshita Bhagia and Yuling Gu and Shengyi Huang and Matt Jordan and Nathan Lambert and Dustin Schwenk and Oyvind Tafjord and Taira Anderson and David Atkinson and Faeze Brahman and Christopher Clark and Pradeep Dasigi and Nouha Dziri and Allyson Ettinger and Michal Guerquin and David Heineman and Hamish Ivison and Pang Wei Koh and Jiacheng Liu and Saumya Malik and William Merrill and Lester James V. Miranda and Jacob Morrison and Tyler Murray and Crystal Nam and Jake Poznanski and Valentina Pyatkin and Aman Rangapur and Michael Schmitz and Sam Skjonsberg and David Wadden and Christopher Wilhelm and Michael Wilson and Luke Zettlemoyer and Ali Farhadi and Noah A. Smith and Hannaneh Hajishirzi},
      year={2025},
      eprint={2501.00656},
      archivePrefix={arXiv},
      primaryClass={cs.CL},
      url={https://arxiv.org/abs/2501.00656}, 
}

@article{megatron-lm,
  title={Megatron-LM: Training Multi-Billion Parameter Language Models Using Model Parallelism},
  author={Shoeybi, Mohammad and Patwary, Mostofa and Puri, Raul and LeGresley, Patrick and Casper, Jared and Catanzaro, Bryan},
  journal={arXiv preprint arXiv:1909.08053},
  url={https://arxiv.org/abs/1909.08053},
  year={2019}
}

@misc{yang2025qwen3technicalreport,
      title={Qwen3 Technical Report}, 
      author={Qwen-Team, An Yang and Anfeng Li and Baosong Yang and Beichen Zhang and Binyuan Hui and Bo Zheng and Bowen Yu and Chang Gao and Chengen Huang and Chenxu Lv and Chujie Zheng and Dayiheng Liu and Fan Zhou and Fei Huang and Feng Hu and Hao Ge and Haoran Wei and Huan Lin and Jialong Tang and Jian Yang and Jianhong Tu and Jianwei Zhang and Jianxin Yang and Jiaxi Yang and Jing Zhou and Jingren Zhou and Junyang Lin and Kai Dang and Keqin Bao and Kexin Yang and Le Yu and Lianghao Deng and Mei Li and Mingfeng Xue and Mingze Li and Pei Zhang and Peng Wang and Qin Zhu and Rui Men and Ruize Gao and Shixuan Liu and Shuang Luo and Tianhao Li and Tianyi Tang and Wenbiao Yin and Xingzhang Ren and Xinyu Wang and Xinyu Zhang and Xuancheng Ren and Yang Fan and Yang Su and Yichang Zhang and Yinger Zhang and Yu Wan and Yuqiong Liu and Zekun Wang and Zeyu Cui and Zhenru Zhang and Zhipeng Zhou and Zihan Qiu},
      year={2025},
      eprint={2505.09388},
      archivePrefix={arXiv},
      primaryClass={cs.CL},
      url={https://arxiv.org/abs/2505.09388}, 
}

@misc{qiu2025demonsdetailimplementingload,
      title={Demons in the Detail: On Implementing Load Balancing Loss for Training Specialized Mixture-of-Expert Models}, 
      author={Zihan Qiu and Zeyu Huang and Bo Zheng and Kaiyue Wen and Zekun Wang and Rui Men and Ivan Titov and Dayiheng Liu and Jingren Zhou and Junyang Lin},
      year={2025},
      eprint={2501.11873},
      archivePrefix={arXiv},
      primaryClass={cs.LG},
      url={https://arxiv.org/abs/2501.11873}, 
}

@misc{wang2024auxiliarylossfreeloadbalancingstrategy,
      title={Auxiliary-Loss-Free Load Balancing Strategy for Mixture-of-Experts}, 
      author={Lean Wang and Huazuo Gao and Chenggang Zhao and Xu Sun and Damai Dai},
      year={2024},
      eprint={2408.15664},
      archivePrefix={arXiv},
      primaryClass={cs.LG},
      url={https://arxiv.org/abs/2408.15664}, 
}

@misc{deepseekai2025deepseekv3technicalreport,
      title={DeepSeek-V3 Technical Report}, 
      author={DeepSeek-AI and Aixin Liu and Bei Feng and Bing Xue and Bingxuan Wang and Bochao Wu and Chengda Lu and Chenggang Zhao and Chengqi Deng and Chenyu Zhang and Chong Ruan and Damai Dai and Daya Guo and Dejian Yang and Deli Chen and Dongjie Ji and Erhang Li and Fangyun Lin and Fucong Dai and Fuli Luo and Guangbo Hao and Guanting Chen and Guowei Li and H. Zhang and Han Bao and Hanwei Xu and Haocheng Wang and Haowei Zhang and Honghui Ding and Huajian Xin and Huazuo Gao and Hui Li and Hui Qu and J. L. Cai and Jian Liang and Jianzhong Guo and Jiaqi Ni and Jiashi Li and Jiawei Wang and Jin Chen and Jingchang Chen and Jingyang Yuan and Junjie Qiu and Junlong Li and Junxiao Song and Kai Dong and Kai Hu and Kaige Gao and Kang Guan and Kexin Huang and Kuai Yu and Lean Wang and Lecong Zhang and Lei Xu and Leyi Xia and Liang Zhao and Litong Wang and Liyue Zhang and Meng Li and Miaojun Wang and Mingchuan Zhang and Minghua Zhang and Minghui Tang and Mingming Li and Ning Tian and Panpan Huang and Peiyi Wang and Peng Zhang and Qiancheng Wang and Qihao Zhu and Qinyu Chen and Qiushi Du and R. J. Chen and R. L. Jin and Ruiqi Ge and Ruisong Zhang and Ruizhe Pan and Runji Wang and Runxin Xu and Ruoyu Zhang and Ruyi Chen and S. S. Li and Shanghao Lu and Shangyan Zhou and Shanhuang Chen and Shaoqing Wu and Shengfeng Ye and Shengfeng Ye and Shirong Ma and Shiyu Wang and Shuang Zhou and Shuiping Yu and Shunfeng Zhou and Shuting Pan and T. Wang and Tao Yun and Tian Pei and Tianyu Sun and W. L. Xiao and Wangding Zeng and Wanjia Zhao and Wei An and Wen Liu and Wenfeng Liang and Wenjun Gao and Wenqin Yu and Wentao Zhang and X. Q. Li and Xiangyue Jin and Xianzu Wang and Xiao Bi and Xiaodong Liu and Xiaohan Wang and Xiaojin Shen and Xiaokang Chen and Xiaokang Zhang and Xiaosha Chen and Xiaotao Nie and Xiaowen Sun and Xiaoxiang Wang and Xin Cheng and Xin Liu and Xin Xie and Xingchao Liu and Xingkai Yu and Xinnan Song and Xinxia Shan and Xinyi Zhou and Xinyu Yang and Xinyuan Li and Xuecheng Su and Xuheng Lin and Y. K. Li and Y. Q. Wang and Y. X. Wei and Y. X. Zhu and Yang Zhang and Yanhong Xu and Yanhong Xu and Yanping Huang and Yao Li and Yao Zhao and Yaofeng Sun and Yaohui Li and Yaohui Wang and Yi Yu and Yi Zheng and Yichao Zhang and Yifan Shi and Yiliang Xiong and Ying He and Ying Tang and Yishi Piao and Yisong Wang and Yixuan Tan and Yiyang Ma and Yiyuan Liu and Yongqiang Guo and Yu Wu and Yuan Ou and Yuchen Zhu and Yuduan Wang and Yue Gong and Yuheng Zou and Yujia He and Yukun Zha and Yunfan Xiong and Yunxian Ma and Yuting Yan and Yuxiang Luo and Yuxiang You and Yuxuan Liu and Yuyang Zhou and Z. F. Wu and Z. Z. Ren and Zehui Ren and Zhangli Sha and Zhe Fu and Zhean Xu and Zhen Huang and Zhen Zhang and Zhenda Xie and Zhengyan Zhang and Zhewen Hao and Zhibin Gou and Zhicheng Ma and Zhigang Yan and Zhihong Shao and Zhipeng Xu and Zhiyu Wu and Zhongyu Zhang and Zhuoshu Li and Zihui Gu and Zijia Zhu and Zijun Liu and Zilin Li and Ziwei Xie and Ziyang Song and Ziyi Gao and Zizheng Pan},
      year={2025},
      eprint={2412.19437},
      archivePrefix={arXiv},
      primaryClass={cs.CL},
      url={https://arxiv.org/abs/2412.19437}, 
}

@misc{kexuefm-10945,
        title={MoE Travelogue: 5. Reflections on Uniform Distribution},
        author={Jianlin Su},
        year={2025},
        month={May},
        url={https://kexue.fm/archives/10945},
}

@misc{kexuefm-11241,
        title={Steepest Decent on Manifolds: 4. Muon + Spectral Sphere},
        author={Jianlin Su},
        year={2025},
        month={Aug},
        url={https://kexue.fm/archives/11241},
}

@misc{kexuefm-10770,
        title={A preliminary exploration of MuP: Cross-model scale migration rules of hyperparameters},
        author={Jianlin Su},
        year={2025},
        month={Mar},
        url={https://kexue.fm/archives/10770},
}

@misc{kexuefm-10795,
        title={Higher-order MuP: simpler but wiser spectral conditioning scaling},
        author={Jianlin Su},
        year={2025},
        month={Mar},
        url={https://kexue.fm/archives/10795},
}

@article{watson1992characterization,
title = {Characterization of the subdifferential of some matrix norms},
journal = {Linear Algebra and its Applications},
volume = {170},
pages = {33-45},
year = {1992},
issn = {0024-3795},
doi = {https://doi.org/10.1016/0024-3795(92)90407-2},
url = {https://www.sciencedirect.com/science/article/pii/0024379592904072},
author = {G.A. Watson},
abstract = {A characterization is given of the subdifferential of matrix norms from two classes, orthogonally invariant norms and operator (or subordinate) norms. Specific results are derived for some special cases.}
}

@misc{pethick2025trainingdeeplearningmodels,
      title={Training Deep Learning Models with Norm-Constrained LMOs}, 
      author={Thomas Pethick and Wanyun Xie and Kimon Antonakopoulos and Zhenyu Zhu and Antonio Silveti-Falls and Volkan Cevher},
      year={2025},
      eprint={2502.07529},
      archivePrefix={arXiv},
      primaryClass={cs.LG},
      url={https://arxiv.org/abs/2502.07529}, 
}

@misc{clark2018arc,
      title={Think you have Solved Question Answering? Try ARC, the AI2 Reasoning Challenge}, 
      author={Peter Clark and Isaac Cowhey and Oren Etzioni and Tushar Khot and Ashish Sabharwal and Carissa Schoenick and Oyvind Tafjord},
      year={2018},
      eprint={1803.05457},
      archivePrefix={arXiv},
      primaryClass={cs.AI},
      url={https://arxiv.org/abs/1803.05457}, 
}

@inproceedings{clark2019boolq,
    title = "{B}ool{Q}: Exploring the Surprising Difficulty of Natural Yes/No Questions",
    author = "Clark, Christopher  and
      Lee, Kenton  and
      Chang, Ming-Wei  and
      Kwiatkowski, Tom  and
      Collins, Michael  and
      Toutanova, Kristina",
    editor = "Burstein, Jill  and
      Doran, Christy  and
      Solorio, Thamar",
    booktitle = "Proceedings of the 2019 Conference of the North {A}merican Chapter of the Association for Computational Linguistics: Human Language Technologies, Volume 1 (Long and Short Papers)",
    month = jun,
    year = "2019",
    address = "Minneapolis, Minnesota",
    publisher = "Association for Computational Linguistics",
    url = "https://aclanthology.org/N19-1300/",
    doi = "10.18653/v1/N19-1300",
    pages = "2924--2936",
}

@inproceedings{bisk2020piqa,
  author    = {Yonatan Bisk and Rowan Zellers and Ronan Le Bras and Jianfeng Gao and Yejin Choi},
  title     = {{PIQA:} Reasoning about Physical Commonsense in Natural Language},
  booktitle = {The Thirty-Fourth {AAAI} Conference on Artificial Intelligence, AAAI 2020},
  pages     = {7432--7439},
  year      = {2020},
  url       = {https://doi.org/10.1609/aaai.v34i05.6239},
  doi       = {10.1609/AAAI.V34I05.6239}
}

@article{sakaguchi2021winogrande,
author = {Sakaguchi, Keisuke and Bras, Ronan Le and Bhagavatula, Chandra and Choi, Yejin},
title = {WinoGrande: an adversarial winograd schema challenge at scale},
year = {2021},
issue_date = {September 2021},
publisher = {Association for Computing Machinery},
address = {New York, NY, USA},
volume = {64},
number = {9},
issn = {0001-0782},
url = {https://doi.org/10.1145/3474381},
doi = {10.1145/3474381},
journal = {Commun. ACM},
month = aug,
pages = {99–106},
numpages = {8}
}

@inproceedings{zellers2019hellaswag,
    title = "{H}ella{S}wag: Can a Machine Really Finish Your Sentence?",
    author = "Zellers, Rowan  and
      Holtzman, Ari  and
      Bisk, Yonatan  and
      Farhadi, Ali  and
      Choi, Yejin",
    editor = "Korhonen, Anna  and
      Traum, David  and
      M{\`a}rquez, Llu{\'i}s",
    booktitle = "Proceedings of the 57th Annual Meeting of the Association for Computational Linguistics",
    month = jul,
    year = "2019",
    address = "Florence, Italy",
    publisher = "Association for Computational Linguistics",
    url = "https://aclanthology.org/P19-1472/",
    doi = "10.18653/v1/P19-1472",
    pages = "4791--4800",
}

@inproceedings{paperno2016lambada,
    title = "The {LAMBADA} dataset: Word prediction requiring a broad discourse context",
    author = "Paperno, Denis  and
      Kruszewski, Germ{\'a}n  and
      Lazaridou, Angeliki  and
      Pham, Ngoc Quan  and
      Bernardi, Raffaella  and
      Pezzelle, Sandro  and
      Baroni, Marco  and
      Boleda, Gemma  and
      Fern{\'a}ndez, Raquel",
    editor = "Erk, Katrin  and
      Smith, Noah A.",
    booktitle = "Proceedings of the 54th Annual Meeting of the Association for Computational Linguistics (Volume 1: Long Papers)",
    month = aug,
    year = "2016",
    address = "Berlin, Germany",
    publisher = "Association for Computational Linguistics",
    url = "https://aclanthology.org/P16-1144/",
    doi = "10.18653/v1/P16-1144",
    pages = "1525--1534"
}

@inproceedings{talmor2019csqa,
    title = "{C}ommonsense{QA}: A Question Answering Challenge Targeting Commonsense Knowledge",
    author = "Talmor, Alon  and
      Herzig, Jonathan  and
      Lourie, Nicholas  and
      Berant, Jonathan",
    editor = "Burstein, Jill  and
      Doran, Christy  and
      Solorio, Thamar",
    booktitle = "Proceedings of the 2019 Conference of the North {A}merican Chapter of the Association for Computational Linguistics: Human Language Technologies, Volume 1 (Long and Short Papers)",
    month = jun,
    year = "2019",
    address = "Minneapolis, Minnesota",
    publisher = "Association for Computational Linguistics",
    url = "https://aclanthology.org/N19-1421/",
    doi = "10.18653/v1/N19-1421",
    pages = "4149--4158",
}

@inproceedings{gupta2018shampoo,
  title={Shampoo: Preconditioned stochastic tensor optimization},
  author={Gupta, Vineet and Koren, Tomer and Singer, Yoram},
  booktitle={International Conference on Machine Learning},
  pages={1842--1850},
  year={2018},
  organization={PMLR},
  URL	= {http://proceedings.mlr.press/v80/gupta18a/gupta18a.pdf}
}

@INPROCEEDINGS{He2015Initialization,
  author={He, Kaiming and Zhang, Xiangyu and Ren, Shaoqing and Sun, Jian},
  booktitle={2015 IEEE International Conference on Computer Vision (ICCV)}, 
  title={Delving Deep into Rectifiers: Surpassing Human-Level Performance on ImageNet Classification}, 
  year={2015},
  volume={},
  number={},
  pages={1026-1034},
  keywords={Training;Computational modeling;Adaptation models;Testing;Gaussian distribution;Biological neural networks},
  doi={10.1109/ICCV.2015.123},
  url={https://ieeexplore.ieee.org/document/7410480}
}

@misc{kimiteam2025kimik2openagentic,
      title={Kimi K2: Open Agentic Intelligence}, 
      author={{Kimi Team} and Yifan Bai and Yiping Bao and Guanduo Chen and Jiahao Chen and Ningxin Chen and Ruijue Chen and Yanru Chen and Yuankun Chen and Yutian Chen and Zhuofu Chen and Jialei Cui and Hao Ding and Mengnan Dong and Angang Du and Chenzhuang Du and Dikang Du and Yulun Du and Yu Fan and Yichen Feng and Kelin Fu and Bofei Gao and Hongcheng Gao and Peizhong Gao and Tong Gao and Xinran Gu and Longyu Guan and Haiqing Guo and Jianhang Guo and Hao Hu and Xiaoru Hao and Tianhong He and Weiran He and Wenyang He and Chao Hong and Yangyang Hu and Zhenxing Hu and Weixiao Huang and Zhiqi Huang and Zihao Huang and Tao Jiang and Zhejun Jiang and Xinyi Jin and Yongsheng Kang and Guokun Lai and Cheng Li and Fang Li and Haoyang Li and Ming Li and Wentao Li and Yanhao Li and Yiwei Li and Zhaowei Li and Zheming Li and Hongzhan Lin and Xiaohan Lin and Zongyu Lin and Chengyin Liu and Chenyu Liu and Hongzhang Liu and Jingyuan Liu and Junqi Liu and Liang Liu and Shaowei Liu and T. Y. Liu and Tianwei Liu and Weizhou Liu and Yangyang Liu and Yibo Liu and Yiping Liu and Yue Liu and Zhengying Liu and Enzhe Lu and Lijun Lu and Shengling Ma and Xinyu Ma and Yingwei Ma and Shaoguang Mao and Jie Mei and Xin Men and Yibo Miao and Siyuan Pan and Yebo Peng and Ruoyu Qin and Bowen Qu and Zeyu Shang and Lidong Shi and Shengyuan Shi and Feifan Song and Jianlin Su and Zhengyuan Su and Xinjie Sun and Flood Sung and Heyi Tang and Jiawen Tao and Qifeng Teng and Chensi Wang and Dinglu Wang and Feng Wang and Haiming Wang and Jianzhou Wang and Jiaxing Wang and Jinhong Wang and Shengjie Wang and Shuyi Wang and Yao Wang and Yejie Wang and Yiqin Wang and Yuxin Wang and Yuzhi Wang and Zhaoji Wang and Zhengtao Wang and Zhexu Wang and Chu Wei and Qianqian Wei and Wenhao Wu and Xingzhe Wu and Yuxin Wu and Chenjun Xiao and Xiaotong Xie and Weimin Xiong and Boyu Xu and Jing Xu and Jinjing Xu and L. H. Xu and Lin Xu and Suting Xu and Weixin Xu and Xinran Xu and Yangchuan Xu and Ziyao Xu and Junjie Yan and Yuzi Yan and Xiaofei Yang and Ying Yang and Zhen Yang and Zhilin Yang and Zonghan Yang and Haotian Yao and Xingcheng Yao and Wenjie Ye and Zhuorui Ye and Bohong Yin and Longhui Yu and Enming Yuan and Hongbang Yuan and Mengjie Yuan and Haobing Zhan and Dehao Zhang and Hao Zhang and Wanlu Zhang and Xiaobin Zhang and Yangkun Zhang and Yizhi Zhang and Yongting Zhang and Yu Zhang and Yutao Zhang and Yutong Zhang and Zheng Zhang and Haotian Zhao and Yikai Zhao and Huabin Zheng and Shaojie Zheng and Jianren Zhou and Xinyu Zhou and Zaida Zhou and Zhen Zhu and Weiyu Zhuang and Xinxing Zu},
      year={2025},
      eprint={2507.20534},
      archivePrefix={arXiv},
      primaryClass={cs.LG},
      url={https://arxiv.org/abs/2507.20534}, 
}

@misc{takase2025spikemorestabilizingpretraining,
      title={Spike No More: Stabilizing the Pre-training of Large Language Models}, 
      author={Sho Takase and Shun Kiyono and Sosuke Kobayashi and Jun Suzuki},
      year={2025},
      eprint={2312.16903},
      archivePrefix={arXiv},
      primaryClass={cs.CL},
      url={https://arxiv.org/abs/2312.16903}, 
}

@misc{henry2020querykeynormalizationtransformers,
      title={Query-Key Normalization for Transformers}, 
      author={Alex Henry and Prudhvi Raj Dachapally and Shubham Pawar and Yuxuan Chen},
      year={2020},
      eprint={2010.04245},
      archivePrefix={arXiv},
      primaryClass={cs.CL},
      url={https://arxiv.org/abs/2010.04245}, 
}

@misc{xie2025mhcmanifoldconstrainedhyperconnections,
      title={m{HC}: Manifold-Constrained Hyper-Connections}, 
      author={Zhenda Xie and Yixuan Wei and Huanqi Cao and Chenggang Zhao and Chengqi Deng and Jiashi Li and Damai Dai and Huazuo Gao and Jiang Chang and Liang Zhao and Shangyan Zhou and Zhean Xu and Zhengyan Zhang and Wangding Zeng and Shengding Hu and Yuqing Wang and Jingyang Yuan and Lean Wang and Wenfeng Liang},
      year={2025},
      eprint={2512.24880},
      archivePrefix={arXiv},
      primaryClass={cs.CL},
      url={https://arxiv.org/abs/2512.24880}, 
}

@misc{emergeing_optimizer, 
    title={Orthogonalized{O}ptimizer}, 
    url={https://docs.nvidia.com/nemo/emerging-optimizers/latest/apidocs/orthogonalized-optimizers.html}, 
    author={NVIDIA},
    year={2025}
}

@misc{bernstein2025manifolds,
  author = {Jeremy Bernstein},
  title = {Modular Manifolds},
  journal = {Thinking Machines Lab: Connectionism},
  year = {2025},
  url = {https://thinkingmachines.ai/blog/modular-manifolds/},
  doi = {10.64434/tml.20250926}
}

@misc{ding2021cogviewmasteringtexttoimagegeneration,
      title={CogView: Mastering Text-to-Image Generation via Transformers}, 
      author={Ming Ding and Zhuoyi Yang and Wenyi Hong and Wendi Zheng and Chang Zhou and Da Yin and Junyang Lin and Xu Zou and Zhou Shao and Hongxia Yang and Jie Tang},
      year={2021},
      eprint={2105.13290},
      archivePrefix={arXiv},
      primaryClass={cs.CV},
      url={https://arxiv.org/abs/2105.13290}, 
}

@misc{kosson2025weightdecaymattermup,
      title={Weight Decay may matter more than muP for Learning Rate Transfer in Practice}, 
      author={Atli Kosson and Jeremy Welborn and Yang Liu and Martin Jaggi and Xi Chen},
      year={2025},
      eprint={2510.19093},
      archivePrefix={arXiv},
      primaryClass={cs.LG},
      url={https://arxiv.org/abs/2510.19093}, 
}

@misc{cesista2025sdcbs,
  author = {Franz Louis Cesista, Kaiyue Wen},
  title = {Critical Batch Size for Steepest Descent Under Arbitrary Norms},
  year = {2025},
  month = {November},
  day = {22},
  url = {https://leloykun.github.io/ponder/steepest-descent-crit-bz/},
}

\clearpage

\appendix

\crefalias{section}{appendix}
\crefalias{subsection}{appendix}

\section{Appendix}

\subsection{Duality with Spectral Norm}
\label{app:dual}

We may first prove~\cref{thm:dual_norm}, and we can then directly derive~\cref{eq:optimal_phi}.

\begin{theorem}\label{thm:dual_norm} 
    Nuclear norm $\|\!\cdot\!\|_{*}$ is the dual norm of spectral norm $\|\!\cdot\!\|_{2}$,
    which means
    \[
        \|\mG\|_{*}=\max_{\|T\|_2=1}\langle \mG,\mT\rangle.
    \]

    $\operatorname{msign}(\cdot)$ is the dual map based on $\|\!\cdot\!\|_{2}$, which
    means
    \[
        \operatorname{msign}(\mG)=\underset{\|\mT\|_2=1}{\argmax}\langle \mG,\mT\rangle.
    \]
\end{theorem}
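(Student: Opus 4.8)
The plan is to prove the nuclear/spectral duality first, then read off the $\operatorname{msign}$ characterization as the maximizer. Write $\mG = \mU\mSigma\mV^\top$ for the (thin) SVD, with $\mSigma = \operatorname{diag}(\sigma_1,\dots,\sigma_r)$ and $\sigma_i \geq 0$. The key algebraic identity is that for any $\mT$, $\langle \mG, \mT\rangle = \Tr(\mG^\top \mT) = \Tr(\mV\mSigma\mU^\top \mT) = \sum_i \sigma_i\, (\mU^\top \mT \mV)_{ii}$. So the inner product only sees the diagonal entries of the rotated matrix $\mathbf{\tilde T} := \mU^\top \mT \mV$.

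First I would establish the upper bound $\langle \mG,\mT\rangle \leq \|\mG\|_*$ whenever $\|\mT\|_2 \leq 1$. Since the spectral norm is orthogonally invariant, $\|\mathbf{\tilde T}\|_2 = \|\mT\|_2 \leq 1$, and the spectral norm dominates the magnitude of every entry, in particular $|\tilde T_{ii}| \leq \|\mathbf{\tilde T}\|_2 \leq 1$. Hence $\langle\mG,\mT\rangle = \sum_i \sigma_i \tilde T_{ii} \leq \sum_i \sigma_i |\tilde T_{ii}| \leq \sum_i \sigma_i = \|\mG\|_*$. Next I would exhibit a matrix attaining this bound: take $\mT = \mU_{[:,:r]}\mV_{[:,:r]}^\top = \operatorname{msign}(\mG)$. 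This $\mT$ has all singular values equal to $1$ (on its $r$-dimensional range), so $\|\mT\|_2 = 1$, and $\langle \mG,\mT\rangle = \Tr(\mV\mSigma\mU^\top \mU_{[:,:r]}\mV_{[:,:r]}^\top) = \Tr(\mSigma) = \|\mG\|_*$. This proves the first display. (A small caveat: if $\mG$ has zero singular values one may pad $\mT$ with an arbitrary partial isometry on the orthogonal complement without changing anything, so "$\|\mT\|_2=1$" is still achievable; I would mention this only in passing.)

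For the second display I need that $\operatorname{msign}(\mG)$ is not merely \emph{a} maximizer but \emph{the} maximizer, which is where uniqueness enters and which I expect to be the main subtlety. From the chain $\langle\mG,\mT\rangle = \sum_i \sigma_i \tilde T_{ii} \leq \sum_{i:\sigma_i>0}\sigma_i$, equality forces $\tilde T_{ii} = 1$ for every $i$ with $\sigma_i > 0$. But $|\tilde T_{ii}| \leq \|\mathbf{\tilde T}\|_2 \leq 1$ with equality in the entry bound only if the $i$-th standard basis vectors are (up to sign) a top singular pair of $\mathbf{\tilde T}$; having $\tilde T_{ii}=1$ for all such $i$ together with $\|\mathbf{\tilde T}\|_2 = 1$ pins down the block of $\mathbf{\tilde T}$ acting on $\operatorname{span}(e_1,\dots,e_k)$ (where $k$ is the number of positive singular values) to be exactly the identity $\mI_k$; any off-diagonal mass in those rows/columns would push a singular value above $1$. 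Rotating back, $\mT$ agrees with $\mU_{[:,:k]}\mV_{[:,:k]}^\top$ on the relevant subspaces, i.e. $\mT = \operatorname{msign}(\mG)$ exactly when $\mG$ is full rank; in the rank-deficient case the maximizer is unique only up to the free block on the kernel, which is the standard convention folded into the definition of $\operatorname{msign}$. I would state the clean full-rank (or "simple spectrum" / generic) case as the theorem and relegate the degenerate bookkeeping to a remark, consistent with the footnote in the main text that coincident singular values are a measure-zero event. The hard part is precisely this last rigidity argument — turning "equality in a sum of bounded terms" into "the whole matrix block is the identity" — which needs the fact that a contraction ($\|\cdot\|_2 \le 1$) with a diagonal entry equal to $1$ must have that coordinate decoupled; I would prove it via the $2\times 2$ principal submatrix / Cauchy interlacing argument, or equivalently by noting $\mI_k - \mathbf{\tilde T}_{[:k,:k]}$ is PSD with zero diagonal hence zero.
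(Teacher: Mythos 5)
Your proof takes essentially the same route as the paper's: SVD of $\mG$, reduce $\langle\mG,\mT\rangle$ to a weighted sum of the terms $\vu_i^\top\mT\vv_i$ (equivalently, diagonal entries of $\mU^\top\mT\mV$), bound each by $\|\mT\|_2\le 1$, and exhibit $\operatorname{msign}(\mG)$ as the achiever. You go one step further on uniqueness than the paper does: the paper's proof jumps from ``equality requires $\vu_i^\top\mT\vv_i=1$ for all $i$'' to ``that is when $\mT=\operatorname{msign}(\mG)$'' without arguing the converse, and your caveat that the maximizer is not unique in the rank-deficient case is correct; your rigidity step is the right patch. One small flag: the PSD-with-zero-diagonal shortcut you offer at the end only controls the \emph{symmetric part} of $\tilde{\mT}_{[:k,:k]}$ (since $\|\tilde{\mT}\|_2\le 1$ gives $\mI-\tfrac{1}{2}(\tilde{\mT}+\tilde{\mT}^\top)\succeq 0$, not $\mI-\tilde{\mT}\succeq 0$), so it does not by itself rule out an antisymmetric off-diagonal block; the cleaner way to finish is the column/row-norm bound $\|\tilde{\mT} e_i\|_2\le\|\tilde{\mT}\|_2\le 1$, which together with $\tilde{\mT}_{ii}=1$ forces the rest of column $i$ (and dually row $i$) to vanish, and this is also the mechanism underlying the $2\times 2$-principal-submatrix route you mention.
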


\begin{proof}
    Since $\mG \in \sR^{n\times m}$ has Singular Vector Decomposition $\mG = \mU \mSigma
    \mV^{\top}=\sum_{i=1}^{r}\evsigma_{i}\vu_{i}\vv_{i}^{\top}$ where $\vu_{i}\in\sR
    ^{n}$ and $\vv_{i}\in\sR^{m}$ are left and right singular vectors, we
    have
    \begin{equation}
        \langle \mG,\mT \rangle = \operatorname{tr}(\mG^{\top} \mT)=\operatorname{tr}(\sum_{i=1}^{r} \evsigma
        _{i} \vv_{i} \vu_{i}^{\top} \mT)=\sum_{i=1}^{r} \evsigma_{i} 
        \vu_{i}^{\top} \mT \vv_{i}.
    \end{equation}
    With $\|\mT\|_{2}=1$ and $\|\vu_{i}\|_{2}=\|\vv_{i}\|_{2}=1$, we have $\vu_{i}^{\top}
    \mT \vv_{i} \leq \|\mT\|_{2} \|\vu_{i}\|_{2}\|\vv_{i}\|_{2} = 1$, hence
    \begin{equation}
        \label{eq:maximum}\langle \mG,\mT \rangle=\sum_{i=1}^{r} \evsigma_{i} \vu_{i}^{\top}
        \mT \vv_{i} \leq \sum_{i=1}^{r} \evsigma_{i} = \|\mG\|_{*}.
    \end{equation}
    Equality is attained when $\vu_{i}^{\top} \mT \vv_{i} = 1$ for all $i=1,\dots,r$, that
    is when
    \begin{equation}
        \mT = \sum_{i=1}^{r} \vu_{i} \vv_{i}^{\top} = \mU_{[:,:r]}\mV_{[:,:r]}^{\top}=\operatorname{msign}
        (\mG).
    \end{equation}
    Note that for this $\mT$ we indeed have $\|\mT\|_{2}=1$ (its nonzero singular
    values are all equal to $1$), so $\mT$ is feasible and achieves the upper
    bound. Therefore, we have
    \begin{equation}
        \underset{\| \mT \|_2=1}{\argmax}\langle \mG, \mT\rangle =\operatorname{msign}(\mG),
    \end{equation}
    and according to~\cref{eq:maximum}, the maximum value equals $\|\mG\|_{*}$.
\end{proof}

\subsection{Proofs: Localization of the Root of \texorpdfstring{$\boldsymbol{h(\lambda)}$}{h(lambda)}}
\label{app:localization}

In this section, we first prove the localization of the root $\lambda^\star$ to $h(\lambda)=0$, which is required in~\cref{sec:tangent_linearization}. We then present experiments showing that the computational overhead of the proposed $\lambda$-solver is negligible compared to the whole training process.

We first prove~\cref{thm:monotonicity}, from which~\cref{thm:localization} follows.

\begin{theorem}\label{thm:monotonicity}
    The function
    \[
        h(\lambda) = \langle \mTheta, \mPhi^{\star}(\lambda) \rangle= \langle \mTheta
        , \operatorname{msign}(\mG + \lambda \mTheta) \rangle
    \]
    is monotonic non-decreasing in $\lambda$.
\end{theorem}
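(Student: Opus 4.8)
The plan is to show that $h$ is monotone by exhibiting it as (a subgradient of) a convex function of $\lambda$. Define
\[
g(\lambda) := \|\mG + \lambda \mTheta\|_* .
\]
Since the nuclear norm is convex and $\lambda \mapsto \mG + \lambda\mTheta$ is affine, $g$ is a convex function of the scalar $\lambda$. By \cref{thm:dual_norm}, for any $\lambda$ at which the top singular value structure is simple we have $\nabla_{\mW}\|\mW\|_* $ evaluated along this line equal to $\operatorname{msign}(\mG+\lambda\mTheta)$ (more precisely, $\operatorname{msign}(\mG+\lambda\mTheta)$ is a subgradient of the nuclear norm at $\mG+\lambda\mTheta$), so by the chain rule
\[
g'(\lambda) = \langle \mTheta, \operatorname{msign}(\mG+\lambda\mTheta)\rangle = h(\lambda).
\]
Hence $h$ is the derivative of a convex function, and therefore monotonically non-decreasing. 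At the (measure-zero, by the footnote in \cref{sec:tangent_linearization}) values of $\lambda$ where $\operatorname{msign}$ is set-valued, $h$ picks out one element of $\partial g$, which still lies between the left and right derivatives, so monotonicity is preserved across such points.

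An alternative, more self-contained route — which I would include if a reviewer objects to invoking subdifferential calculus — is a direct two-point argument. Fix $\lambda_1 < \lambda_2$ and write $\mPhi_i := \operatorname{msign}(\mG+\lambda_i\mTheta)$. By optimality (the $\argmax$ characterization in \cref{thm:dual_norm}) applied to $\mG+\lambda_1\mTheta$ with competitor $\mPhi_2$, and to $\mG+\lambda_2\mTheta$ with competitor $\mPhi_1$:
\[
\langle \mG+\lambda_1\mTheta, \mPhi_1\rangle \ge \langle \mG+\lambda_1\mTheta, \mPhi_2\rangle,
\qquad
\langle \mG+\lambda_2\mTheta, \mPhi_2\rangle \ge \langle \mG+\lambda_2\mTheta, \mPhi_1\rangle .
\]
Adding these two inequalities, the $\mG$-terms cancel and one obtains $(\lambda_1-\lambda_2)\langle\mTheta,\mPhi_1\rangle \ge (\lambda_1-\lambda_2)\langle\mTheta,\mPhi_2\rangle$; dividing by the negative quantity $\lambda_1-\lambda_2$ flips the inequality to give $\langle\mTheta,\mPhi_1\rangle \le \langle\mTheta,\mPhi_2\rangle$, i.e. $h(\lambda_1)\le h(\lambda_2)$. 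This is exactly the standard monotonicity-of-argmax / "Topkis"-type argument specialized to a linear objective.

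The main obstacle is the non-uniqueness of $\operatorname{msign}$ when $\mG+\lambda\mTheta$ has repeated singular values (so the $\argmax$ in \cref{thm:dual_norm} is not a singleton): there the value $h(\lambda)$ as used in the algorithm is one selection from a set, and one must check the claimed monotonicity is robust to that selection. Both routes handle this cleanly — the convexity route because any subgradient lies between one-sided derivatives of the convex $g$, and the two-point route because the inequalities above hold for \emph{any} maximizers $\mPhi_1,\mPhi_2$, not just canonical ones. I would present the convexity argument as the main proof and relegate the selection subtlety to a short remark, citing the genericity footnote already in \cref{sec:tangent_linearization} to note that in practice $h$ is single-valued and continuous off a measure-zero set. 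The transition of $h$ from $-1$ to $+1$ (claimed in property i) then follows from the asymptotics $\operatorname{msign}(\mG+\lambda\mTheta)\to \operatorname{msign}(\pm\mTheta)=\pm\mTheta$ as $\lambda\to\pm\infty$ together with $\langle\mTheta,\mTheta\rangle = \|\vu_1\vv_1^\top\|_F^2 = 1$, and the root-localization (property ii) from combining monotonicity with the bound $|h(\lambda)-\operatorname{sign}(\lambda)\cdot 1|$ controlled by $\|\mG\|_*/|\lambda|$.
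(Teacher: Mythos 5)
Your proposal is correct, and it contains two routes. Your ``alternative, more self-contained'' two-point argument is \emph{exactly} the paper's proof of this theorem: the paper fixes $\lambda_2 > \lambda_1$, applies the $\argmax$ characterization of $\operatorname{msign}$ twice (once with each competitor), sums the two inequalities so the $\mG$-terms cancel, and divides by $\lambda_2 - \lambda_1 > 0$. Your ``main'' route — observing that $g(\lambda) := \|\mG + \lambda\mTheta\|_*$ is a convex function of $\lambda$ (composition of the convex nuclear norm with an affine map), that $h(\lambda) = \langle \mTheta, \operatorname{msign}(\mG+\lambda\mTheta)\rangle$ is a subgradient selection of $g$, and that subgradients of a scalar convex function are non-decreasing — is a genuinely different and more conceptual argument. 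Interestingly, the paper does invoke this same convexity/subgradient observation, but only in the proof of the \emph{next} theorem (root localization), and only to deduce that $h$ has the intermediate-value property; it does not use it to establish monotonicity. Your convexity route is arguably cleaner because it delivers monotonicity, continuity-off-a-null-set, and the IVP in one stroke, at the cost of requiring the reader to accept the chain rule for subdifferentials; the paper's two-point argument buys elementariness and avoids any convex-analysis machinery. Your explicit handling of the set-valuedness of $\operatorname{msign}$ at repeated singular values is a point the paper glosses over (relying on a genericity footnote in the main text), and your observation that both routes are robust to the selection — the two-point inequalities hold for \emph{any} maximizers, and any subgradient lies between one-sided derivatives — is a worthwhile addition that tightens the argument.
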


\begin{proof}
    Recall that the matrix sign function can be equivalently defined as the solution of the spectral-norm constrained maximization as in~\cref{thm:dual_norm}:
    \begin{equation}
        \operatorname{msign}(\mG) = \underset{\|T\|_2=1}{\argmax}\langle \mG, \mT \rangle.
    \end{equation}
    Consider two values $\lambda_{2} > \lambda_{1}$. By the definition of $\mPhi^{\star}(\lambda)$ as the maximizer of $\langle \mG + \lambda \mTheta, \cdot \rangle$, we
    obtain
    \begin{align}
        \langle \mG + \lambda_{1} \mTheta,\mPhi^{\star}(\lambda_{1}) \rangle \geq & \langle \mG + \lambda_{1} \mTheta,\mPhi^{\star}(\lambda_{2}) \rangle \tag*{} \\
        \Rightarrow\quad                                                               & \langle \mG ,\mPhi^{\star}(\lambda_{1}) \rangle + \lambda_{1}\langle \mTheta,\mPhi^{\star}(\lambda_{1}) \rangle \geq \langle \mG ,\mPhi^{\star}(\lambda_{2}) \rangle + \lambda_{1}\langle \mTheta,\mPhi^{\star}(\lambda_{2})\rangle,\label{eq:phi1} \\
        \langle \mG + \lambda_{2} \mTheta,\mPhi^{\star}(\lambda_{2}) \rangle \geq & \langle \mG + \lambda_{2} \mTheta,\mPhi^{\star}(\lambda_{1}) \rangle \tag*{} \\
        \Rightarrow\quad                                                               & \langle \mG ,\mPhi^{\star}(\lambda_{2}) \rangle + \lambda_{2}\langle \mTheta,\mPhi^{\star}(\lambda_{2})\rangle\geq \langle \mG ,\mPhi^{\star}(\lambda_{1}) \rangle+\lambda_{2}\langle \mTheta,\mPhi^{\star}(\lambda_{1})\rangle.\label{eq:phi2}
    \end{align}
    Summing~\cref{eq:phi1} and~\cref{eq:phi2}, we can derive
    \begin{align}
     \lambda_{1}\langle \mTheta,\mPhi^{\star}(\lambda_{1})\rangle + \lambda_{2}\langle \mTheta,\mPhi^{\star}(\lambda_{2})\rangle \geq \lambda_{1}\langle \mTheta,\mPhi^{\star}(\lambda_{2})\rangle &+ \lambda_{2}\langle \mTheta,\mPhi^{\star}(\lambda_{1})\rangle
     \\ 
     \Rightarrow\quad
        (\lambda_{2} - \lambda_{1}) \langle \mTheta,\mPhi^{\star}(\lambda_{2}) \rangle
        \geq (\lambda_{2} - \lambda_{1}) \langle \mTheta,\mPhi^{\star}(\lambda
        _{1}) \rangle\quad &\Rightarrow\quad  \langle \mTheta, \mPhi^{\star}(\lambda_{2}) \rangle
        \geq \langle \mTheta, \mPhi^{\star}(\lambda_{1}) \rangle,
    \end{align}
    with $\lambda_{2}-\lambda_{1}>0$. Hence, $h(\lambda) = \langle \mTheta, \operatorname{msign}(\mG + \lambda \mTheta) \rangle$ is monotonic non-decreasing in $\lambda$.
\end{proof}

\begin{theorem}\label{thm:localization}
    There exists some $\lambda^\star\in\sR$ such that $h(\lambda^\star)=0$, and any such root satisfies
\[
|\lambda^\star|\leq 2\|\mG\|_*.
\]
\end{theorem}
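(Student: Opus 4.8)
The plan is to establish existence of a root via the Intermediate Value Theorem, using the limiting behavior of $h$ at $\lambda\to\pm\infty$, and then to localize any root by a direct estimate on $h(\lambda)$ for $|\lambda|$ large, showing that the sign of $h(\lambda)$ is forced once $|\lambda|>2\|\mG\|_*$.

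First I would argue existence. By \cref{thm:dual_norm}, $\mPhi^\star(\lambda)=\operatorname{msign}(\mG+\lambda\mTheta)$, and since $\mTheta=\vu_1\vv_1^\top$ is rank one with $\|\mTheta\|_2=1$ and $\|\mTheta\|_*=1$, we have $h(\lambda)=\langle\mTheta,\mPhi^\star(\lambda)\rangle\in[-1,1]$ for all $\lambda$ (this follows from the Cauchy--Schwarz-type bound $\langle\mTheta,\mT\rangle\le\|\mTheta\|_*\|\mT\|_2=1$ used in the proof of \cref{thm:dual_norm}, and symmetrically for the lower bound). For $\lambda\to+\infty$, the term $\lambda\mTheta$ dominates $\mG$, so $\operatorname{msign}(\mG+\lambda\mTheta)\to\operatorname{msign}(\mTheta)=\mTheta$, whence $h(\lambda)\to\langle\mTheta,\mTheta\rangle=\|\mTheta\|_F^2=1$; similarly $h(\lambda)\to-1$ as $\lambda\to-\infty$. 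Since $h$ is monotone non-decreasing by \cref{thm:monotonicity}, and $h$ is continuous (the top singular value of $\mW$ being simple makes $\mTheta$ well-defined, and $\operatorname{msign}$ is continuous wherever the relevant singular values of $\mG+\lambda\mTheta$ stay away from $0$; away from the measure-zero set of degeneracies the IVT applies, and monotonicity plus the boundary limits force a crossing of $0$), there exists $\lambda^\star$ with $h(\lambda^\star)=0$.

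Next I would localize the root. Fix $\lambda>2\|\mG\|_*$ and write $\mA=\mG+\lambda\mTheta$. I want to show $h(\lambda)=\langle\mTheta,\operatorname{msign}(\mA)\rangle>0$, so no root can lie above $2\|\mG\|_*$; the case $\lambda<-2\|\mG\|_*$ is symmetric. The key inequality is
\begin{equation}
\langle\mA,\operatorname{msign}(\mA)\rangle=\|\mA\|_*\ \ge\ \|\lambda\mTheta\|_*-\|\mG\|_*=\lambda-\|\mG\|_*,
\end{equation}
by \cref{thm:dual_norm} and the triangle inequality for the nuclear norm. On the other hand, $\langle\mA,\operatorname{msign}(\mA)\rangle=\langle\mG,\operatorname{msign}(\mA)\rangle+\lambda\,h(\lambda)$, and $\langle\mG,\operatorname{msign}(\mA)\rangle\le\|\mG\|_*\|\operatorname{msign}(\mA)\|_2\le\|\mG\|_*$ (again the duality bound, since $\|\operatorname{msign}(\mA)\|_2\le1$). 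Combining,
\begin{equation}
\lambda\,h(\lambda)\ \ge\ \lambda-\|\mG\|_*-\|\mG\|_*\ =\ \lambda-2\|\mG\|_*\ >\ 0,
\end{equation}
so $h(\lambda)>0$ for every $\lambda>2\|\mG\|_*$, and by monotonicity every root satisfies $\lambda^\star\le 2\|\mG\|_*$. Running the same argument with $\lambda<-2\|\mG\|_*$ (using $\langle\mA,\operatorname{msign}(\mA)\rangle\ge |\lambda|-\|\mG\|_*$ and $\lambda\,h(\lambda)=\langle\mA,\operatorname{msign}(\mA)\rangle-\langle\mG,\operatorname{msign}(\mA)\rangle\ge |\lambda|-2\|\mG\|_*>0$, which forces $h(\lambda)<0$ since $\lambda<0$) gives $\lambda^\star\ge-2\|\mG\|_*$, hence $|\lambda^\star|\le 2\|\mG\|_*$.

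The main obstacle I anticipate is the continuity/well-definedness of $h$ at the finitely many $\lambda$ where $\mG+\lambda\mTheta$ becomes singular or has repeated singular values, so that $\operatorname{msign}$ and hence $h$ could jump; I would handle this by noting that $h$ is monotone and bounded, so it has left and right limits everywhere and at worst countably many jump discontinuities, and that the boundary behavior $h(\pm\infty)=\pm1$ together with monotonicity still guarantees that the value $0$ is attained or straddled — and in fact, since the localization estimate above shows $h$ is strictly positive (resp. negative) strictly outside $[-2\|\mG\|_*,2\|\mG\|_*]$, any root, whether a genuine zero or a point where $h$ changes sign, must lie in that interval. (Alternatively, one may simply adopt the convention, standard in this literature, that $\mG$ is generic so that the simple-spectrum event holds and $h$ is continuous; the bound on $\lambda^\star$ is unaffected.)
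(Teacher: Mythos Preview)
Your proposal is correct and follows essentially the same route as the paper: the localization argument is identical (decompose $\lambda\,h(\lambda)=\|\mG+\lambda\mTheta\|_*-\langle\mG,\operatorname{msign}(\mG+\lambda\mTheta)\rangle$, then apply the nuclear-norm triangle inequality and the duality bound $\langle\mG,\mT\rangle\le\|\mG\|_*$). The one difference worth noting is in the existence step: where you invoke continuity of $\operatorname{msign}$ and then hedge about degenerate spectra, the paper sidesteps this entirely by observing that $h(\lambda)=\langle\mTheta,\operatorname{msign}(\mG+\lambda\mTheta)\rangle$ is a subgradient of the convex scalar function $\lambda\mapsto\|\mG+\lambda\mTheta\|_*$ (via Watson's characterization of the nuclear-norm subdifferential), so monotonicity plus the Darboux property of subgradients of convex functions yields the intermediate value property directly---this cleanly resolves the obstacle you flagged without any genericity assumption.
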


\begin{proof}
    We first prove the existence of a root $\lambda^\star$. Since $\mTheta=\vu_1 \vv_1^\top$ is a rank-one matrix with unit-norm singular vectors, its nuclear norm satisfies $\|\mTheta\|_* = 1$. Moreover, by construction, we have
\begin{equation}
\|\mPhi^{\star}(\lambda)\|_2=\|\operatorname{msign}(\mG + \lambda\mTheta)\|_2 = 1.
\end{equation}
By~\cref{thm:dual_norm}, letting $\mT_1=\frac{\mPhi^{\star}(\lambda)}{\|\mPhi^{\star}(\lambda)\|_2},\mT_2=-\frac{\mPhi^{\star}(\lambda)}{\|\mPhi^{\star}(\lambda)\|_2}$, we have
\begin{equation}\label{eq:abs_bound}
\|\mT_1\|_2=\|\mT_2\|_2=1\quad\text{and thus}\quad\langle\mTheta,\mT_1\rangle\leq\|\mTheta\|_*\ \text{and}\ \langle\mTheta,\mT_2\rangle\leq\|\mTheta\|_*.
\end{equation}
Therefore, we have
\begin{equation}
\left|\langle \mTheta,\operatorname{msign}(\mG+\lambda\mTheta)\rangle\right|
\leq
\|\mTheta\|_*\,\|\operatorname{msign}(\mG+\lambda\mTheta)\|_2
=1,
\end{equation}
which implies $|h(\lambda)|\leq 1$ for all $\lambda \in \sR$. Moreover, since $h(\lambda)$ is monotonic non-decreasing by~\cref{thm:monotonicity}, the limits $\lim_{\lambda\to -\infty} h(\lambda)$ and $\lim_{\lambda\to +\infty} h(\lambda)$ exist. Using the property $\operatorname{msign}(\lambda \mT)=\operatorname{msign}(\mT)$, we have
\begin{equation}
\lim_{\lambda\to+\infty}\operatorname{msign}(\mG + \lambda\mTheta)
=
\lim_{\lambda\to+\infty}\operatorname{msign}\left[\lambda\left(\mTheta + \tfrac{1}{\lambda}\mG\right)\right]
=\operatorname{msign}(\mTheta)
= \mTheta,
\end{equation}
where the last equality is satisfied by definition of $\operatorname{msign}(\cdot)$ and $\mTheta$. Consequently,
\begin{equation}
\lim_{\lambda\to +\infty} h(\lambda)
=
\langle \mTheta,\mTheta\rangle
=
\operatorname{tr}(\vv_1 \vu_1^\top \vu_1 \vv_1^\top)
=
1.
\end{equation}
Similarly, we can also obtain $\lim_{\lambda\to -\infty} h(\lambda) = -1$. Since $\operatorname{msign}(\mG+\lambda\mTheta)$ is a subgradient of a convex function $\|\mG+\lambda\mTheta\|_*$ with respect to $\lambda$ according to \citet{watson1992characterization}, monotonic non-decreasing $h(\lambda)$ satisfies the intermediate value property. Consequently, there exists at least one
$\lambda^\star \in \sR$ such that $h(\lambda^\star) = 0$. 

We next localize the root. 
For any $\lambda >2\|\mG\|_*>0$ and any matrix $\mT$ with $\|\mT\|_2 = 1$, we have
\begin{equation}
\lambda \langle \mTheta, \mT \rangle=\langle \mG + \lambda \mTheta, \mT \rangle -  \langle \mG, \mT \rangle.
\end{equation}
By~\cref{eq:abs_bound}, $\langle \mG, \mT \rangle \leq \|\mG\|_*$. Let $\mT=\operatorname{msign}(\mG+\lambda\mTheta)$, using~\cref{thm:dual_norm}, we can derive
\begin{equation}
    \lambda h(\lambda)=\|\mG+\lambda\mTheta\|_*-\langle\mG,\mT\rangle\geq\big|\|\mG\|_*-\lambda\|\mTheta\|_*\big|-\|\mG\|_*\geq\lambda-2\|\mG\|_*>0,
\end{equation}
and this implies $h(\lambda) > 0$.
Similarly, if $\lambda < -2\|\mG\|_*$, $h(\lambda) < 0$.
Since $h(\lambda)$ is monotonic non-decreasing, any root $\lambda^\star$ satisfying
$h(\lambda^\star)=0$ must therefore lie in the interval
$\left[-2\|\mG\|_*,\, 2\|\mG\|_*\right]$.
\end{proof}

\cref{thm:localization} actually provides a theoretical guarantee that the correctness of using the bisection algorithm to solve the root $\lambda^\star$. To further validate this theoretical result, we randomly generate several zero-centered matrices with varying dimensions, construct $\mG$ and $\mTheta$ as described in this section, and plot the curve of $h(\lambda)$ together with its root. As shown in~\cref{fig:f_lambda_numeric}, $f(\lambda)$ is indeed monotonic in $\lambda$, and its root lies close to $\lambda = 0$, which is consistent with our theoretical analysis.

\subsection{Dynamic Spectral Weight Decay}
\label{app:wd_retract}

\begin{figure}
    \centering
    \includegraphics[width=\linewidth]{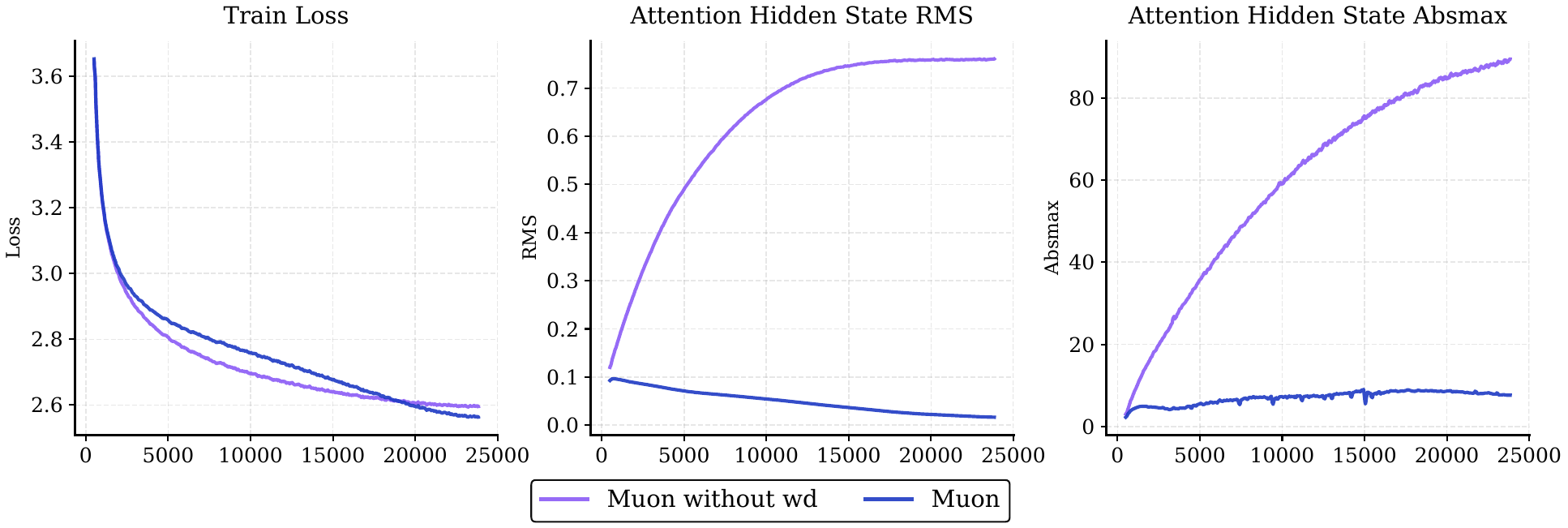}
    \caption{\textbf{Abaltion of weight decay in \textcolor{MuonColor}{Muon}.} In contrast with \textcolor{SpectralSphereColor}{Spectral Sphere} (\cref{fig:hidden_state_tracker}), without weight constraint, \textcolor{MuonColor}{Muon} training dynamics become instable, which in-turn would hurt performance.}
    \label{fig:muon_wd}
\end{figure}

In this section, we introduce spectral retraction mechanisms to counteract the accumulation of higher-order errors that may gradually drift the weights off the spectral sphere. As noted in~\cref{sec:retraction}, the remainder term $\mathcal{O}(\eta^2 R^2 \|\mPhi\|_2^2)$ in the Taylor expansion can accumulate over iterations. To enforce the exact constraint $\|\mW\|_2=R$ throughout training, we apply~\cref{eq:retraction_map} that projects the weights back onto the spectral manifold.

In practice, we consider two retraction variants that differ in how strictly the spectral constraint is enforced. The hard variant applies an explicit projection to maintain $\|\mW\|_2 = R$ at every step, while the dynamic variant replaces the exact projection with a soft, learning-rate-scaled radial correction that steers $\|\mW\|_2$ toward the target radius. These two variants correspond to enforcing the spectral constraint exactly or approximately, and lead to different interactions with conventional weight decay.

\paragraph{Hard retraction (exact projection).}
The hard variant applies the retraction map explicitly using an estimated top singular value $\sigma \approx \|\mW\|_2$ which is computed via Power Iteration.
\begin{equation}
\mW \gets \frac{R}{\sigma}\,\mW.
\end{equation}
This enforces the exact constraint $\|\mW\|_2 = R$ at every step, with deviations arising solely from the approximation error in $\sigma$. 
Under this setting, Spectral Sphere is typically used together with standard decoupled weight decay as in AdamW: weight decay shrinks the full parameter vector, while the spectral retraction constrains the dominant singular value.

\paragraph{Dynamic retraction (soft spectral decay).}
The dynamic variant replaces the exact projection with a small, sign-controlled radial adjustment with hyperparameter $\lambda$.
\begin{equation}
\mW \gets \bigl(1 + \lambda\,\eta\sign(R-\sigma)\bigr)\mW.
\end{equation}
Instead of strictly enforcing the constraint $\|\mW\|_2 = R$, this update gently adjusts the spectral norm toward the target radius $R$ based on $\sigma$. Since the correction is proportional to $\eta$, the adjustment strength diminishes automatically as learning rate schedules. Under this formulation, dynamic retraction functions as a spectrally aligned, layer-wise analogue to decoupled weight decay. Consequently, we tie $\lambda$ to the AdamW weight decay coefficient and employ the dynamic variant as a drop-in replacement for conventional weight decay.

From a geometric perspective, the two variants above can both be viewed as approximate projections onto the spectral sphere. As discussed in~\cref{sec:retraction}, applying retraction at the beginning of iteration $t+1$ instead of after the update at iteration $t$ only introduces $\mathcal{O}(\eta^2)$ discrepancies. Consequently, the update remains first-order equivalent to steepest descent on the spectral manifold.

\subsection{MoE Routing Scaling Factor}
\label{app:moe_scaling}

Following~\citep{kexuefm-10945}, in our MoE architecture, the output $\vy$ is the sum of the shared experts and the routed experts. A critical issue arises from the magnitude mismatch between the two parts:
\begin{equation}
\vy = \underbrace{\sum_{i=1}^{N_{\text{shared}}} \ve_{\text{shared}, i}}_{\text{Weight } \approx 1} + \underbrace{\sum_{j \in \text{TopK}} g_j \ve_{\text{routed}, j}}_{\text{Weight } g_j \ll 1}
\end{equation}
The shared experts are directly added to the residual stream, effectively having a coefficient of 1. In contrast, the routed experts are multiplied by sigmoid probabilities $g_j$, which are typically small. Consequently, the variance (signal energy) of the routed experts is significantly lower than that of the shared experts, causing the optimizer to neglect the routed part.

To balance the contributions, we compute a scaling factor $M$ to the routed experts:
\begin{equation}
\vy = \sum_{i=1}^{N_{\text{shared}}} \ve_{\text{shared},i} + M \sum_{j\in\text{TopK}} g_j \ve_{\text{routed},j}
\end{equation}
We aim to choose $M$ such that the expected variance of the routed term matches that of the shared term, under assumption that experts have unit variance:
\begin{equation}
M \approx \sqrt{\frac{N_{\text{shared}}}{\mathbb{E}[\sum g_j^2]}}
\end{equation}
Using numerical simulation (\cref{lst:moe_scaling}) with $N_{\text{shared}}=1$ and Top-4 sigmoid routing, we find $M \approx 2.0$. We find this scaling factor is crucial for MoE training stability.

\begin{lstlisting}[
    style=Code,
    language=Python,
    caption={Python implementation for estimating the MoE scaling factor $M$.},
    label={lst:moe_scaling}
]
import numpy as np

def estimate_scaling_factor(n_total=64, k_routed=4, n_shared=1):
    factors = []
    for _ in range(10000):
        # 1. Simulate logits
        logits = np.random.randn(n_total - n_shared)

        # 2. Get Sigmoid Scores
        scores = 1 / (1 + np.exp(-logits))

        # 3. Select Top-K routed scores
        scores = np.sort(scores)[::-1][:k_routed]

        # 3. Normalize scores
        scores /= scores.sum()

        # 4. Calculate Magnitude
        magnitude = np.sum(scores**2)**0.5

        factors.append(n_shared**0.5 / magnitude)

    return np.mean(factors)
\end{lstlisting}

\subsection{\texorpdfstring{$\boldsymbol{\mu}$}{mu}P Width Scaling}
\label{app:mup_width}
In~\cref{fig:mup_width}, we conduct experiment to validate $\boldsymbol{\mu}$P width scaling across different model sizes from 70M to 1.8B. Additional AdamW results are included in~\cref{fig:triple_mup}.
We scale the hidden size, intermediate size, and number of attention heads (while fixing head dimensions). The models are trained on 30B tokens sampled from the Olmo2-mix-1124 dataset~\citep{olmo20252olmo2furious}. All optimizers use the Spectral $\boldsymbol{\mu}$P LR Scaler. 

\begin{itemize}
    \item The LR is swept across \{1e-3, 3e-3, 5e-3, 7e-3, 9e-3, 1e-2, 1.5e-2, 2e-2, 3e-2\}.
    \item Hidden size is swept across \{256, 512, 1024, 2048\}.
\end{itemize}

\begin{figure}[ht]
    \centering
    \includegraphics[width=1\linewidth]{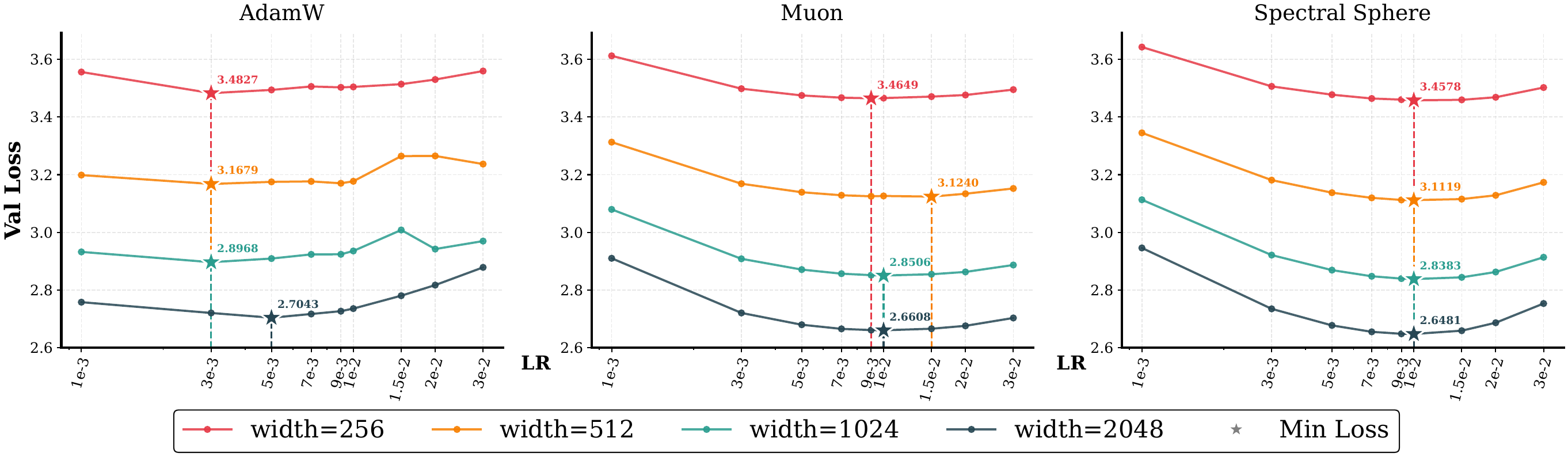}
    \caption{\textbf{$\boldsymbol{\mu}$P LR grid search with \textcolor{AdamWColor}{AdamW}, \textcolor{MuonColor}{Muon} and \textcolor{SpectralSphereColor}{Spectral Sphere}.} \textcolor{AdamWColor}{AdamW} shows even worse validation loss, with drifting optimal LR.} 
    \label{fig:triple_mup}
\end{figure}

\end{document}